\title{An Efficient Algorithm for Cooperative Semi-Bandits}
\author{
  \textbf{Riccardo Della Vecchia}\\
  Artificial Intelligence Lab, Institute for Data Science \& Analytics\\
    Bocconi University, Milano, Italy
  \and
  \textbf{Tommaso Cesari}\\
  Toulouse School of Economics (TSE), Toulouse, France\\
    Artificial and Natural Intelligence Toulouse Institute (ANITI), Toulouse, France 
}
\newcommand{\cA}{\mathcal{S}}
\newcommand{\calA}{\mathcal{A}}
\newcommand{\cB}{\mathcal{B}}
\newcommand{\cE}{\mathcal{E}}
\newcommand{\cG}{\mathcal{G}}
\newcommand{\cI}{\mathcal{I}}
\newcommand{\cN}{\mathcal{N}}
\newcommand{\cO}{\mathcal{O}}
\newcommand{\cS}{\mathcal{S}}
\newcommand{\cV}{\mathcal{V}}
\newcommand{\cW}{\mathcal{W}}
\newcommand{\bbE}{\mathbb{E}}
\newcommand{\bbI}{\mathbb{I}}
\newcommand{\bbN}{\mathbb{N}}
\newcommand{\N}{\mathbb{N}}
\newcommand{\bbP}{\mathbb{P}}
\newcommand{\bbR}{\mathbb{R}}
\newcommand{\R}{\mathbb{R}}
\DeclareMathOperator*{\argmin}{argmin}
\DeclareMathOperator*{\argmax}{argmax}
\newcommand{\interior}{\mathrm{int}}
\newcommand{\dom}{\mathrm{dom}}
\newcommand{\sign}{\mathrm{sign}}
\newcommand{\fen}{Fenchel}
\newcommand{\leg}{Legendre}
\newcommand{\breg}{Bregman divergence}
\newcommand{\lrb}[1]{\left(#1\right)}
\newcommand{\brb}[1]{\bigl(#1\bigr)}
\newcommand{\Brb}[1]{\Bigl(#1\Bigr)}
\newcommand{\Bbrb}[1]{\Biggl(#1\Biggr)}
\newcommand{\lsb}[1]{\left[#1\right]}
\newcommand{\bsb}[1]{\bigl[#1\bigr]}
\newcommand{\Bsb}[1]{\Bigl[#1\Bigr]}
\newcommand{\Bbsb}[1]{\Biggl[#1\Biggr]}
\newcommand{\bcb}[1]{\bigl\{#1\bigr\}}
\newcommand{\labs}[1]{\left\lvert#1\right\rvert}
\newcommand{\lno}[1]{\left\lVert#1\right\rVert}
\newcommand{\lan}[1]{\left\langle#1\right\rangle}
\newcommand{\ban}[1]{\bigl\langle#1\bigr\rangle}
\newcommand{\s}{\subset}
\newcommand{\m}{\setminus}
\newcommand{\iop}{\infty}
\newcommand{\ld}{\ldots}
\renewcommand{\l}{\ldots}
\newcommand{\xt}{\widetilde{x}}
\newcommand{\diam}{\mathrm{diam}}
\newcommand{\Int}{\mathrm{int}}
\newcommand{\feedb}{f}
\newcommand{\conv}{\mathrm{co}}
\newcommand{\lhat}{\widehat{\ell}}
\newcommand{\xbar}{\overline{x}}
\newcommand{\Rbar}{\R\cup\{+\iop\}}
\newcommand{\coa}{\conv(\calA)}
\newcommand{\ExpCoop}{Exp3-COOP}
\newcommand{\coopftpl}{Coop-FTPL}
\newcommand{\coopgr}{Coop-GR}
\newcommand{\osmd}{OSMD}
\newtheorem{theorem}{Theorem}
\newtheorem{lemma}{Lemma}
\newtheorem{definition}{Definition}
\newtheorem*{definition*}{Definition}
\newtheorem{proposition}{Proposition}
\newtheorem*{proposition*}{Proposition}
\newtheorem{corollary}{Corollary}
\newtheorem*{corollary*}{Corollary}
\newtheorem*{theorem*}{Theorem}
\newtheorem*{lemma*}{Lemma}
\begin{document}

\maketitle

\begin{abstract}
We consider the problem of asynchronous online combinatorial optimization on a network of communicating agents. 
At each time step, some of the agents are stochastically activated, requested to make a prediction, and the system pays the corresponding loss.
Then, neighbors of active agents receive semi-bandit feedback and exchange some succinct local information. 
The goal is to minimize the network regret, defined as the difference between the cumulative loss of the predictions of active agents and that of the best action in hindsight, selected from a combinatorial decision set. 
The main challenge in such a context is to control the computational complexity of the resulting algorithm while retaining minimax optimal regret guarantees.
We introduce \coopftpl{}, a cooperative version of the well-known Follow The Perturbed Leader algorithm, that implements a new loss estimation procedure generalizing the Geometric Resampling of \cite{neu2013efficient} to our setting.
Assuming that the elements of the decision set are $k$-dimensional binary vectors with at most $m$ non-zero entries and $\alpha_1$ is the independence number of the network, we show that the expected regret of our algorithm after $T$ time steps is of order
$Q\sqrt{mkT\log(k) (k\alpha_1/Q+m)}$,  where $Q$ is the total activation probability mass. 
Furthermore, we prove that this is only $\sqrt{k\log k}$-away from the best achievable rate and that \coopftpl{} has a state-of-the-art $T^{3/2}$ worst-case computational complexity.
\end{abstract}

\section{Introduction}

Distributed online settings with communication constraints arise naturally in large-scale learning systems. 
For example, in domains such as finance or online advertising, agents often serve high volumes of prediction requests and have to update their local models in an online fashion. 
Bandwidth and computational constraints may therefore preclude a central processor from having access to all the observations from all sessions and synchronizing all local models at the same time.
With these motivations in mind, we introduce and analyze a new online learning setting in which a network of agents solves efficiently a common nonstochastic combinatorial semi-bandit problem by sharing information only with their network neighbors.
At each time step $t$, some agents $v$ belonging to a communication network $\cG$ are asked to make a prediction $x_t(v)$ belonging to a subset $\calA$ of $\{0,1\}^k$ and pay a (linear) loss $\ban{ x_t(v) , \ell_t }$ where $\ell_t\in [0,1]^k$ is chosen adversarially by an oblivious environment. 
Then, any such agent $v$ receives the feedback $\brb{ x_{t}(1,v)\ell_{t}(1), \ld ,x_{t}(k,v)\ell_{t}(k) }$, which is shared, together with some local information, to its first neighbors in the graph. 
The goal is to minimize the network regret after $T$ time steps
\begin{equation}
\label{e:regret-intro}
    R_T
=
	\max_{a \in \calA} \bbE \lsb{ \sum_{t=1}^T \sum_{v\in \cA_t} \ban{x_t(v), \ell_t} - \sum_{t=1}^T \sum_{v\in \cA_t} \lan{a, \ell_t} }\;,
\end{equation}
where $\cS_t$ is the set of agents $v$ that made a prediction at time $t$.
In words, this is the difference between the cumulative loss of the ``active'' agents and the loss that they would have incurred had they consistently made the best prediction in hindsight. 

For this setting, we design a distributed algorithm that we call \coopftpl{} (Algorithm~\ref{FTPL}), and prove that its regret is upper bounded by $Q\sqrt{mkT\log(k) ({k\alpha_1}/{Q}+m)}$ (Theorem~\ref{thm:main}), where $\alpha_1$ is the independence number of the network $\cG$ and $Q$ is the sum over all agents of the probability that the agent is active during a time step. 
Our algorithm employs an estimation technique that we call Cooperative Geometric Resampling (\coopgr{}, Algorithm~\ref{GRcoop}). 
It is an extension of a similar procedure appearing in \citep{neu2013efficient} that relies on the fact that the reciprocal of the probability of an event can be estimated by measuring the reoccurrence time. 
We can leverage this idea in the context of cooperative learning thanks to some statistical properties of the minimum of a family of geometric random variables (see Lemmas~\ref{lem:Min-Geoms}--\ref{lem:buildGeoms}).
Our algorithm has a state-of-the-art dependence on time of order $T^{3/2}$ for the worst-case computational complexity (Proposition~\ref{p:comput-complexity}).
Moreover, we show with a lower bound (Theorem~\ref{t:lower-bound}) that no algorithm can achieve a regret smaller than $Q\sqrt{m k T\alpha_1/Q}$ on all cooperative semi-bandit instances. 
Thus, our \coopftpl{} is at most a multiplicative factor of $\sqrt{k\log k}$-away from the minimax result.

To the best of our knowledge, ours is the first computationally efficient near-optimal learning algorithm for the problem of cooperative learning with nonstochastic combinatorial bandits, where not all agents are necessarily active at all time steps.

\section{Related work and further applications}

Single-agent combinatorial bandits find applications in several fields, such as path planning, ranking and matching problems, finding minimum-weight spanning trees, cut sets, and multitask bandits.
An efficient algorithm for this setting is Follow-The-Perturbed-Leader (FTPL), which was first proposed by \cite{hannan1957approximation} and later rediscovered by \cite{kalai2005efficient}. 
\cite{neu2013efficient} show that combining FTPL with a loss estimation procedure called Geometric Resampling (GR) leads to a computationally efficient solution for this problem.
More precisely, the solution is efficient given that the offline optimization problem of finding
\begin{equation}\label{eq:efficientOPT}
a^{\star}=\argmin_{a \in \mathcal{A}}\langle a, y\rangle\;, \qquad \forall y\in[0,+\infty)^k
\end{equation}
admits a computationally efficient algorithm. 
This assumption is minimal, in the sense that if the offline problem in Eq.~\eqref{eq:efficientOPT} is hard to approximate, then any algorithm with low regret must also be inefficient.\footnote{A slight relaxation in this direction would be assuming that Eq.~\eqref{eq:efficientOPT} can be approximated accurately and efficiently.}
\cite{grotschel2012geometric} and \cite{lee2018efficient} give some sufficient conditions for the validity of this assumption. 
They essentially rely on having an efficient membership oracle for the convex hull $\conv(\calA)$ of $\calA$ and an evaluation oracle for the linear function to optimize.
\cite{audibert2014regret} note that Online Stochastic Mirror Descent (OSMD) or Follow The Regularized Leader (FTRL)-type algorithms can be efficiently implemented by convex programming if the convex hull of the decision set can be described by a polynomial number of constraints.
\cite{suehiro2012online} investigate the details of such efficient implementations and design an algorithm with $k^{6}$ time-complexity, which might still be unfeasible in practice. 
Methods based on the exponential weighting of each decision vector can be implemented efficiently only in a handful of special cases ---see, e.g., \citep{koolen2010hedging} and \citep{cesa2012combinatorial} for some examples. 

The study of cooperative nonstochastic online learning on networks was pioneered by \cite{awerbuch2008competitive}, who investigated a bandit setting in which the communication graph is a clique, agents belong to clusters characterized by the same loss, and some agents may be non-cooperative. 
In our multi-agent setting, the end goal is to control the total network regret \eqref{e:regret-intro}.
This objective was already studied by \cite{cesa2019cooperative} in the full-information case. 
A similar line of work was pursued by \cite{cesa2019delay}, where the authors consider networks of learning agents that cooperate to solve the same nonstochastic bandit problem. 
In their setting, all agents are simultaneously active at all time steps, and the feedback propagates throughout the network with a maximum delay of $d$ time steps, where $d$ is a parameter of the proposed algorithm. 
The authors introduce a cooperative version of Exp3 that they call \ExpCoop{} with regret of order $\sqrt{(d+1+{K\alpha_{d}}/{N}) (T\log K)}$  where $K$ is the number of arms in the nonstochastic bandit problem, $N$ is the total number of agents in the network, and $\alpha_{d}$ is the independence number of the $d$-th power of the communication network. 
The case $d=1$ corresponds to information that arrives with one round of delay and communication limited to first neighbors.
In this setting \ExpCoop{} has regret of order $\sqrt{(1+{K\alpha_1}/{N}) (T\log K)}$. Thus, our work can be seen as an extension of this setting to the case of combinatorial bandits with stochastic activation of agents. 
Finally, we point out that if the network consists of a single node, our cooperative setting collapses into a single-agent combinatorial semi-bandit problem. 
In particular, when the number of arms is $k$ and $m=1$, this becomes the well-known adversarial multiarmed bandit problem  (see \citep{auer2002finite}).
Hence, ours is a proper generalization of all the settings mentioned above.
These cooperative problems are also studied in stochastic setting (see, e.g., \cite{martinez2019decentralized}).

Finally, the reader may wonder what kind of results could be achieved if the agents are activated adversarially rather than stochastically.
\cite{cesa2019cooperative} showed that in this setting no learning can occur, not even in with full-information feedback.

\section{Cooperative semi-bandit setting}
In this section, we present our cooperative semi-bandit protocol and we introduce all relevant definitions and notation.

We say that $\cG=(\cV, \cE)$ is a \emph{communication network} over $N$ agents if it is an undirected graph over a set $\cV$ with cardinality $\labs{ \cV } = N$, whose elements we refer to as \emph{agents}. 
Without loss of generality, we assume that $\cV=\{1, \ldots, N\}$. 
For any agent $v \in \cV$, we denote by $\cN(v)$ the set of agents containing $v$ and the neighborhood $\{w \in \cV : (v, w) \in \cE\}$. 
We say that $\alpha_1$ is the \emph{independence number} of the network $\cG$ if is the largest cardinality of an \emph{independent set} of $\cG$, where an independent set of $\cG$ is a subset of agents, no two of which are neighbors.

We study the following cooperative online combinatorial optimization protocol. 
Initially, hidden from the agents, the environment draws a sequence of subsets $\cS_{1}, \cS_{2}, \ldots \s \cV$ of agents, that we call \emph{active}, and a sequence of  \emph{loss vectors} $\ell_1, \ell_2, \ldots \in \bbR^k$.
We assume that each agent $v$ has a probability $q(v)$ of being activated, which need only be known by $v$.
The set of active agents $\cS_t$ at time $t\in \{1,2,\ld\}$ is then determined by drawing, for each agent $v\in \cV$, a Bernoulli random variable $X_t(v)$ with bias $q(v)$, independently of the past, and $\cS_t$ consists exclusively of agents $v \in \cV$ for which $X_t(v)=1$.
The \emph{decision set} is a subset $\calA$ of $\bcb{ a\in \{0,1\}^k : \sum_{i=1}^k a_i \leq m }$, for some $m \in \{1,\ld,k\}$.

\noindent{}For each time step $t \in\{1,2,\ldots\}$:
\begin{enumerate}[topsep = 0pt, parsep = 0pt, itemsep = 0pt]
\item each active agent $v \in \cS_{t}$ predicts with $x_{t}(v) \in \calA$ (possibly drawn at random);
\item each neighbor $v \in \cN(w)$ of an active agent $w \in \cS_t$ receives the feedback
\begin{equation}
\label{e:feedb}
    \feedb_t(w) := \brb{ x_{t}(1,w)\ell_{t}(1), \ld ,x_{t}(k,w)\ell_{t}(k) }\;;
\end{equation}
\item each agent $v \in \bigcup_{w \in \cS_{t}} \cN(w)$ receives some local information from its neighbors in $\cN(v)$;
\item the system incurs the loss $\sum_{v\in \cA_t} \ban{ x_t(v), \ell_t }$.
\end{enumerate}
The goal is to minimize the expected \emph{network regret} as a function of the \emph{time horizon} $T$, defined by
\begin{equation}\label{eq:totalExpRegret}
    R_T
:=
    \max_{a \in \calA} \bbE\lsb{
	\sum_{t=1}^T \sum_{v\in \cA_t} \ban{ x_t(v), \ell_t }
	- \sum_{t=1}^T \sum_{v\in \cA_t} \lan{ a, \ell_t }
	}\;,
\end{equation}
where the expectation is taken with respect to the draws of $\cS_1, \ldots, \cS_T$ and (possibly) the randomization of the learners.
In the next sections we will also denote by $\bbP_t$ the probability conditioned to the history up to and including round $t-1$, and by $\bbE_t$ the corresponding expectation.

The nature of the local information exchanged by neighbors of active agents will be clarified in the next section. 
In short, they share succinct representations of the current state of their local prediction model.

\section{\coopftpl{} and upper bound} 
\label{s:upper-bound}

In this section we introduce and analyze our efficient \coopftpl{} algorithm for cooperative online combinatorial optimization.

\subsection{The Coop-FTPL algorithm}

\coopftpl{} (Algorithm~\ref{FTPL}) takes as input a decision set $\calA \s \{0,1\}^k$, a time horizon $T\in \N$, a learning rate $\eta > 0$, a truncation parameter $\beta$, and an exploration distribution $\zeta$.
At each time step $t$, all active agents $v$ make a FTPL prediction $x_t(v)$ (line~\ref{state:ftpl-update}) with an i.i.d.\ perturbation sampled from $\zeta$ (line~\ref{state:perturb}), then they receive some feedback $f_t(v)$ and share it with their first neighbors (line~\ref{state:feedb}).
Afterwards, each agent who received some feedback this round, requests from its neighbors $w$ a vector $K_t(w)$ of geometric random samples (line~\ref{state:local-info}) which is efficiently computed by Algorithm~\ref{GRcoop} and will be described in detail later.
With these geometric samples, each agent $v$ computes an estimated loss $\widehat{\ell}_t(v)$ (line~\ref{state:update-ell}) and updates the cumulative loss estimate $\widehat{L}_t(v)$ (line~\ref{state:update-L}).
This estimator is described in detail in Section~\ref{s:esti}.

\begin{algorithm2e}
\caption{\label{FTPL}Follow the perturbed leader for cooperative semi-bandits (\coopftpl{})}
    \LinesNumbered
    \SetAlgoNoLine
    \SetAlgoNoEnd
    \DontPrintSemicolon
    \SetKwInput{kwInit}{Initialization}
    \KwIn{decision set $\calA$, horizon $T$, learning rate $\eta$, truncation parameter $\beta$, exploration pdf $\zeta$}
    \kwInit{$\widehat{L}_{0} = 0 \in \mathbb{R}^{k}$}
\For{each time $t=1,2,\l$}
{
    \For(\tcp*[f]{active agents}){
           each active agent $v\in \cS_t$
    }
    {
        sample $Z_{t}(v) \sim \zeta$, independently of the past\label{state:perturb}\;
        make the prediction $x_{t}(v)=\argmax_{a \in \mathcal{A}}\ban{ a, Z_{t}(v)-\eta \widehat{L}_{t-1}(v) }$\label{state:ftpl-update}\;
    }
    \For(\tcp*[f]{neighbors of active agents}){each agent $v\in \bigcup_{w \in \cS_t}\cN(w)$} 
    {
        receive the feedback $\feedb_{t}(w)$ (Eq.~\eqref{e:feedb}) from each active neighbor $w \in \cN(v) \cap \cS_t$\label{state:feedb}\;
        receive $K_t(i,w)$ ($\forall \ i \in \{1,\ld,k\}$) from each neighbor $w\in \cN(v)$ using Algorithm~\ref{GRcoop}\label{state:local-info}\;
        compute $\widehat{\ell}_t (i,v)=\ell_{t}(i) \, B_{t} \, (i,v) \min_{w\in\cN (v)} \bcb{ K_t(i,w)}$, $\forall i\in\{1,\ldots,k\}$ (Eq.~\eqref{e:c-1}--\eqref{e:def-B})\label{state:update-ell}\;
        update $\widehat{L}_{t}(v)=\widehat{L}_{t-1}(v)+\widehat{\ell}_t (v)$\label{state:update-L}
    }
}
\end{algorithm2e}

\begin{algorithm2e}
    \LinesNumbered
    \SetAlgoNoLine
    \SetAlgoNoEnd
    \DontPrintSemicolon
    \KwIn{time step $t$, component $i$, agent $w$, truncation parameter $\beta$, exploration pdf $\zeta$}
\For{$s = 1,2,\ld$}
{
    sample $z'_s \sim \zeta$, independently of the past\;
    let $x'_s$ be the $i$-th component of the vector $\argmax_{a\in\calA} \ban{ a,z'_s-\eta\widehat{L}_{t-1}(w)}$\;
    sample $y'_s$ from a Bernoulli distribution with parameter $q(w)$, independently of the past\;
    if $( x'_s=1 \textbf{ and } y'_s=1 ) \textbf{ or } s=\beta$, \textbf{break}\;
}
\Return $K_t(i,w) = s$\;
\caption{\label{GRcoop} Geometric resampling for cooperative semi-bandits (\coopgr{}) \label{GeomResampling}}
\end{algorithm2e}

\subsection{Reduction to OSMD}
\label{s:redu-osmd}

Before describing $K_t(w)$ and $\widehat{\ell}_t(v)$, we make a connection between FTPL and the Online Stochastic Mirror Descent algorithm (OSMD)\footnote{For a brief overview of some key convex analysis and OSMD facts, see Appendices~\ref{s:convex-recap}~and~\ref{s:osmd}} that will be crucial for our analysis.\footnote{For a similar approach in the single-agent case, see \citep{lattimore2020bandit}.}

Fix any time step $t$ and an agent $v$. 
As we mentioned above, if $v$ is active, it makes the following FTPL prediction (line~\ref{state:ftpl-update})
\[
    x_{t}(v)
=
    \argmin_{a\in\calA} \ban{ a,\eta\widehat{L}_{t-1}(v)-Z_{t}(v)} \;,    
\]
where $Z_{t}(v)\in\mathbb{R}^{k}$ is sampled i.i.d.\ from $\zeta$ (the random perturbations introduce the exploration, which for an appropriate
choice of $\zeta$ is sufficient to guarantee small regret).
On the other hand, given a Legendre potential $F$ with $\text{dom}\left(\nabla F\right)=\interior\left(\text{co}\left(\mathcal{A}\right)\right)$,
an \osmd{} algorithm makes the prediction
\[
    \xbar_t(v)
=
    \argmin_{x\in\conv(\calA)} \Brb{
    \ban{ x, \eta \, \widehat{\ell}_{t-1}(v) }
    + \cB_{F} ( x, \xbar_{t-1}(v) )
    } \;,
\]
where $\cB_F$ is the \breg{} induced by $F$ and $\conv(\calA)$ is the convex hull of $\calA$.
Using the fact that $\text{dom}(\nabla F) = \interior\brb{ \conv(\calA) }$, the $\argmin$ above can be computed in a standard way by studying when the gradient of its argument is equal to zero, and proceeding inductively, we obtain the two identities
$
    \nabla F(\xbar_t(v))
=
    \nabla F(\xbar_{t-1}(v))-\eta\, \widehat{\ell}_{t-1}(v)
=
    -\eta\widehat{L}_{t-1}(v) \;.
$
By duality this implies that $\xbar_t(v)=\nabla F^{*}\brb{ -\eta\widehat{L}_{t-1}(v)}$.
We now want to relate $x_t(v)$ and $\xbar_t(v)$ so that
\begin{equation}
\label{e:osmd-ftlp}
    \xbar_t(v)
=
    \bbE_t \bsb{ x_t(v) }
=
    \bbE_t \lsb{
        \argmin_{a\in\calA}
        \ban{ a,\eta\widehat{L}_{t-1}(v)-Z_{t}(v)}
    }\;,
\end{equation}
where the conditional expectation $\bbE_t$ (given the history up to time $t-1$) is taken with respect to $Z_t(v)$.
Thus, in order to view FTPL as an instance of \osmd{}, it suffices
to find a Legendre potential $F$ with $\text{dom}\left(\nabla F\right)=\text{int}\left(\text{co}\left(\mathcal{A}\right)\right)$
such that
$
    \nabla F^{*}\brb{ -\eta\widehat{L}_{t-1}(v) }
=
    \mathbb{E}_t \bsb{
    \argmax_{a\in \calA}
    \ban{ 
        a,Z_{t}(v)-\eta\widehat{L}_{t-1}(v)
    }
    }
$.
In order to satisfy this condition, we need that for any $x\in\mathbb{R}^{k}$,
the \fen{} conjugate $F^*$ of $F$ enjoys
$\nabla F^{*}\left(x\right)=\int_{\mathbb{R}^{k}}\argmax_{a\in\mathcal{A}}\left\langle a,z-x\right\rangle \:\zeta(z)\:\mathrm{d}z.$
Then, we define
$
    h(x)
:=
    \argmax_{a\in\mathcal{A}}\left\langle a,x\right\rangle 
$ 
for any $x\in\bbR^k$,
where $h(x)$ is chosen to be an arbitrary maximizer if multiple maximizers
exist. 
From convex analysis, if the convex hull $\conv(\mathcal{A})$ of $\calA$ had a
smooth boundary, then the support function
$
    x\mapsto \phi(x)
:=
    \max_{a\in\conv(\mathcal{A})}\left\langle a,x\right\rangle
=
    \max_{a\in\mathcal{A}}\left\langle a,x\right\rangle ,
$
of $\conv(\mathcal{A})$ would satisfy $\nabla\phi(x)=h(x)$. 
For combinatorial bandits, $\conv(\mathcal{A})$
is non-smooth, but, being $\zeta$ a density with respect to Lebesgue
measure, one can prove (see, e.g., 
\cite{lattimore2020bandit}) 
that
$
    \nabla\int_{\mathbb{R}^{k}}\phi\left(x+z\right)\zeta(z)\:\mathrm{d}z
=
    \int_{\mathbb{R}^{k}} h\left(x+z\right)\zeta(z)\:\mathrm{d}z
$,
for all $x\in\mathbb{R}^{k}$.
This shows that FTPL can be interpreted as \osmd{} with
a potential $F$ defined implicitly by its Fenchel conjugate
\[
    F^{*}(x)
:=
    \int_{\mathbb{R}^{k}}\phi\left(x+z\right)\zeta(z)\:\mathrm{d}z\;, \qquad \forall x \in \bbR^k\;.
\]
Thus, recalling \eqref{e:osmd-ftlp}, we can think of the update $\xbar_t(v)$ of \osmd{} as the average of a random component-wise draw 
$\xbar_t(i,v) =\sum_{a\in\mathcal{A}}P_{t}(a,v) \, a(i)$ (for all $i\in\{1,\ld,k\}$),
with respect to a distribution $P_t(v)$ on $\calA$ defined in terms of the distribution of $Z_t$, as
\[
    P_t(a,v)
=
    \bbP_t
    \Bsb{
        h \brb{ Z_{t}(v)-\eta\widehat{L}_{t-1}(v) } = a
    }\;, \qquad \forall a \in \calA\;,
\]
where $\bbP_t$ is the probability conditioned of the history up to time $t-1$.

\subsection{An efficient estimator}
\label{s:esti}

For the understanding of the definitions and analyses of $K_t(w)$ and $\widehat{\ell}_t(v)$, we introduce three useful lemmas on geometric distributions.
We defer their proofs to Appendix~\ref{s:geometric-appe}.

\begin{lemma}
\label{lem:Min-Geoms}
Let $Y_1,\ld,Y_m$ be $m$ independent random variables such that each $Y_j$ has a geometric distribution with parameter $p_j \in [0,1]$. Then, the random variable $Z := \min_{j\in \{1,\ld,m\}} Y_j$ has a geometric distribution with parameter $1-\prod_{j=1}^m (1-p_j)$.
\end{lemma}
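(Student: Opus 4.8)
The plan is to identify the law of $Z$ through its tail (survival) function, which is exactly the device that turns a minimum into a product. First I would fix the convention that a geometric random variable $Y$ with parameter $p\in[0,1]$ takes values in $\{1,2,\ldots\}$ (the number of i.i.d.\ $\mathrm{Bernoulli}(p)$ trials needed to obtain the first success), so that $\bbP(Y>n)=(1-p)^n$ for every integer $n\ge 0$, with the usual conventions that $Y=1$ almost surely if $p=1$ and $Y=+\infty$ almost surely if $p=0$. This is the convention under which the variables $K_t(i,w)$ produced by Algorithm~\ref{GRcoop} are (untruncated) geometric, so it is the relevant one for the paper.

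Next I would use the independence of $Y_1,\ldots,Y_m$: for every integer $n\ge 0$,
\[
    \bbP(Z>n)
=
    \bbP\lrb{\textstyle\bigcap_{j=1}^m \{Y_j>n\}}
=
    \prod_{j=1}^m \bbP(Y_j>n)
=
    \prod_{j=1}^m (1-p_j)^n
=
    \lrb{\textstyle\prod_{j=1}^m (1-p_j)}^{\! n}.
\]
Writing $q:=1-\prod_{j=1}^m(1-p_j)\in[0,1]$, this says $\bbP(Z>n)=(1-q)^n$ for all $n\ge 0$, which is precisely the tail function of a geometric random variable with parameter $q$. Since the distribution of an $\{1,2,\ldots\}\cup\{+\infty\}$-valued random variable is determined by its tail function (through $\bbP(Z=n)=\bbP(Z>n-1)-\bbP(Z>n)$ for $n\ge1$, and $\bbP(Z=+\infty)=\lim_{n\to\infty}\bbP(Z>n)$), this identifies the law of $Z$ and the lemma follows.

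I do not expect a genuine obstacle here; the only point that needs a little care is the boundary behaviour $p_j\in\{0,1\}$ and the induced degenerate cases, all of which the tail computation above handles uniformly (for instance $q=0$, i.e.\ $Z=+\infty$ almost surely, occurs exactly when every $p_j=0$). If a more probabilistic phrasing is preferred, one can instead run the $m$ geometric clocks in parallel and note that at each round the chance that none of them has yet rung equals the chance that $m$ independent trials $\mathrm{Bernoulli}(p_1),\ldots,\mathrm{Bernoulli}(p_m)$ are simultaneously $0$, namely $\prod_{j=1}^m(1-p_j)$; hence $Z$, the first round at which at least one clock rings, is geometric with parameter $1-\prod_{j=1}^m(1-p_j)$.
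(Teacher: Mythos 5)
Your proof is correct and takes essentially the same route as the paper: the paper computes $\bbP[Z\le n]=1-\prod_{j=1}^m\bbP[Y_j>n]$ via the c.d.f., which is just the complement of your tail-function calculation. Your treatment of the degenerate cases $p_j\in\{0,1\}$ is a welcome extra bit of care, but the core argument is identical.
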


\begin{lemma}
\label{lem:Min-Geom-Beta}
Let $G$ be a geometric random variable with parameter $q\in(0,1]$ and $\beta > 0$. 
Then, the expectation of the random variable $\min\{G, \beta\}$ satisfies $\bbE\bsb{ \min\{G,\beta\} } = \brb{ 1 - (1-q)^\beta }/q$.
\end{lemma}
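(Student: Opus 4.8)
The plan is to compute the expectation directly from the tail-sum (layer-cake) identity for $\N$-valued random variables, exploiting the fact that truncating at $\beta$ simply kills the tail probabilities indexed beyond $\beta$.

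First I would fix the convention: $G$ takes values in $\{1,2,\ldots\}$ with $\bbP(G=j)=(1-q)^{j-1}q$, so that $\bbP(G\ge j)=(1-q)^{j-1}$ for every $j\ge 1$; here $\beta$ is a positive integer, as it is the iteration-count truncation used in Algorithm~\ref{GRcoop}. Since $Y:=\min\{G,\beta\}$ is again $\N$-valued and bounded, I would invoke the identity $\bbE[Y]=\sum_{j=1}^{\infty}\bbP(Y\ge j)$.

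The key observation is then that for $1\le j\le\beta$ the events $\{Y\ge j\}$ and $\{G\ge j\}$ coincide (capping at $\beta$ cannot push a value below an index $j\le\beta$), whereas $\bbP(Y\ge j)=0$ for $j>\beta$. Hence $\bbE[Y]=\sum_{j=1}^{\beta}\bbP(G\ge j)=\sum_{j=1}^{\beta}(1-q)^{j-1}$, a finite geometric series that sums to $\lrb{1-(1-q)^\beta}/q$; the denominator $1-(1-q)=q$ is nonzero because $q\in(0,1]$, and the degenerate case $q=1$ (where $G\equiv 1$) checks out, both sides equalling $1$.

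There is essentially no real obstacle here; the only points that require care are fixing the indexing convention for the geometric law — the off-by-one between the ``number of trials'' and ``number of failures'' parametrizations shifts the exponent — and confirming that $\beta\in\N$ so that the truncation aligns cleanly with the summation index. An alternative, slightly more computational route would be to write $\bbE[Y]=\sum_{j=1}^{\beta-1}j\,(1-q)^{j-1}q+\beta\,(1-q)^{\beta-1}$ and resum by parts, but I would present the tail-sum argument, which is shorter and avoids that manipulation.
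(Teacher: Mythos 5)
Your proof is correct, but it takes a different route from the paper's. The paper computes $\bbE\bsb{\min\{G,\beta\}}$ directly as the full geometric mean minus the excess term $\sum_{n=\beta}^{\infty}(n-\beta)(1-q)^{n-1}q$, factors $(1-q)^{\beta}$ out of that second sum, re-indexes, and recognizes the leftover series as the full geometric mean again, yielding the prefactor $1-(1-q)^{\beta}$. You instead invoke the tail-sum identity $\bbE[Y]=\sum_{j\ge 1}\bbP(Y\ge j)$, observe that truncation at $\beta$ leaves the tails $\bbP(G\ge j)=(1-q)^{j-1}$ untouched for $j\le\beta$ and annihilates them for $j>\beta$, and sum a finite geometric series. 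Your argument is somewhat cleaner: it avoids manipulating the infinite series $\sum_n n(1-q)^{n-1}q$ and the re-indexing step entirely, at the cost of first establishing (or citing) the layer-cake formula for $\bbN$-valued variables. Both arguments require $\beta$ to be a positive integer, which you explicitly flag and justify via Algorithm~\ref{GRcoop}; the paper's statement says only $\beta>0$, and its own proof (writing $\sum_{n=\beta}^{\infty}$) makes the same implicit assumption without comment, so your added precision here is a small but genuine improvement.
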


\begin{lemma}
\label{lem:buildGeoms}
For all $v \in \cV$, fix two arbitrary numbers $p_1(v), p_2(v) \in [0,1]$. 
Consider a collection $\bcb{ X_s (v), Y_s(v) }_{ s \in \bbN, v \in \cV }$ of independent Bernoulli random variables such that $\bbE \bsb{ X_s(v) } = p_1(v)$ and $\bbE \bsb{ Y_s(v) } = p_2(v)$ for any $s\in \bbN$ and all $v\in \cV$.
Then, the random variables $\{ G(v) \}_{v\in \cV}$ defined for all $v\in \cV$ by $G(v) := \inf\bcb{ s \in \bbN : X_s(v) \, Y_s(v) = 1 }$ are all independent and they have a geometric distribution with parameter $p_1(v)\,p_2(v)$.
\end{lemma}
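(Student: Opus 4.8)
The plan is to split the statement into two independent claims: (a) each $G(v)$ individually has a geometric distribution with parameter $p_1(v)p_2(v)$, and (b) the whole family $\{G(v)\}_{v\in\cV}$ is mutually independent. These can be handled separately and then combined.

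For (a), I would fix an agent $v\in\cV$ and set $W_s(v):=X_s(v)\,Y_s(v)$ for $s\in\bbN$. Since $X_s(v)$ and $Y_s(v)$ are independent Bernoulli variables with means $p_1(v)$ and $p_2(v)$, the product $W_s(v)$ is a Bernoulli variable with mean $p_1(v)p_2(v)$; and because the pairs $\{(X_s(v),Y_s(v))\}_{s\in\bbN}$ are independent across $s$, the sequence $\{W_s(v)\}_{s\in\bbN}$ is i.i.d.\ Bernoulli with parameter $p_1(v)p_2(v)$. By definition $G(v)=\inf\{s\in\bbN: W_s(v)=1\}$ is the index of the first success in this i.i.d.\ Bernoulli sequence, which is precisely the canonical construction of a geometric random variable: for every integer $s\ge 0$, $\bbP[G(v)>s]=\prod_{r=1}^{s}\bbP[W_r(v)=0]=(1-p_1(v)p_2(v))^{s}$, the geometric tail. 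In the degenerate case $p_1(v)p_2(v)=0$ this forces $G(v)=+\infty$ almost surely, which we read as a geometric variable with parameter $0$ under the convention used elsewhere in the paper.

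For (b), I would note that $G(v)$ is a measurable function of the block $\mathcal{B}_v:=\{(X_s(v),Y_s(v))\}_{s\in\bbN}$ alone, involving no variable indexed by an agent $w\neq v$. Since the entire collection $\{X_s(v),Y_s(v)\}_{s\in\bbN,\,v\in\cV}$ is independent and the blocks $\{\mathcal{B}_v\}_{v\in\cV}$ are indexed by pairwise disjoint subsets of this index set, the family $\{\mathcal{B}_v\}_{v\in\cV}$ is itself an independent family of random elements; hence the variables $G(v)=g_v(\mathcal{B}_v)$, being functions of distinct members of that independent family, are mutually independent. Combining (a) and (b) gives the lemma.

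I do not expect a real obstacle here: the proof is essentially an unwinding of definitions. The only two points that deserve care are (i) justifying the "functions of disjoint independent blocks are independent" step rigorously — this is the grouping property of independence, which for a countable index set follows from a standard $\pi$-$\lambda$ argument applied to the generating cylinder events of each block — and (ii) stating explicitly the convention for the boundary value $p_1(v)p_2(v)=0$ (namely $G(v)=+\infty$ a.s.) so that the conclusion holds verbatim.
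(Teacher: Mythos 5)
Your proposal is correct and follows essentially the same route as the paper, which likewise reduces the claim to the observation that the products $X_s(v)\,Y_s(v)$ form an independent collection of Bernoulli variables with mean $p_1(v)\,p_2(v)$ and then reads off the geometric distribution from the first-success construction. You simply spell out the tail computation, the disjoint-blocks independence argument, and the degenerate case $p_1(v)p_2(v)=0$, all of which the paper leaves implicit.
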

Fix now any time step $t$, agent $v$, and component $i\in \{1,\ld,k\}$. 
The loss estimator $\widehat{\ell}_t (i,v)$ depends on the algorithmic definition of $K_t(i,w)$ in Algorithm~\ref{GRcoop}, where $w\in \cN(v)$.
By Lemma~\ref{lem:buildGeoms}, we have that for any $w$, conditionally on the history up to time $t-1$, the random variable $K_t(i,w)$, has a truncated geometric distribution with success probability equal to $\xbar_t(i,w)q(w)$ and truncation parameter $\beta$. 
The idea  is that using $K_t(i,w)$ as an estimator we can reconstruct the inverse of the probability that agent $v$ observes $i$ at time $t$.
The truncation parameter $\beta$ is not needed for the analysis, but it is used just to optimize the computational efficiency of the algorithm.\footnote{Previously known cooperative algorithms for limited feedback settings need to exchange at least two real numbers: the probability according to which predictions are drawn and the loss. Instead of the probability, we only need to pass the integer $K_t(i,w)$, which requires at most $\log_2(\beta)$ bits (order of $\log(T)$, when tuned). Note also that for the loss, we could exchange an approximation $l_t$ of $\ell_t$ using only $n = O(\log(T))$ bits. Indeed one can show that Lemma~\ref{lem:The-expectation-of}, in this case, is true replacing $\ell_t$ with $l_t$ in the first equality. Everything else works the same in the proof of Theorem~\ref{thm:main} up an extra (negligible) $m2^{-n}T$ term.}

The loss estimator of $v$ is then defined as
\begin{equation}
    \label{e:c-1}
    \widehat{\ell}_t (i,v)
:=
    \ell_{t}(i) \, B_{t} \, (i,v) \min_{w\in\cN (v)} \bcb{ K_t (i,w) } ,
\end{equation}
where 
\begin{equation}
\label{e:def-B}
    B_t(i,v)
=
    \bbI \bcb{ 
        \exists w\in\cN(v) : w\in\cS_t,\,x_t(i,w)=1 
    } \;,
\qquad
    K_t (i,w)
=
    \min\bcb{ G_t (i,w), \beta }\;,
\end{equation}
and given the history up to time $t-1$, for each $i\in\{1,\ldots,k\}$, the family $\bcb{ G_{t}(i,w) }_{ w\in\cV}$ consists of independent geometric random variables with parameter $\xbar_t(i,w)q(w)$.
Note that the geometric random variables $G_{t}(i,w)$ are actually never computed by Algorithm~\ref{GRcoop} which efficiently computes only their truncations $K_t(i,w)$, with truncation parameter $\beta$. 
Nevertheless, as it will be apparent later, they are a useful tool for the theoretical analysis.
Note that, by Eq.~\eqref{e:osmd-ftlp}, we have
\[
    \mathbb{P}_t\bsb{ x_{t}(i,w)=1 }
=
    \bbE_t[x_t(i,w)]
=
    \overline{x}_{t}\left(i,w\right),
\]
therefore
\[
    \overline{B}_{t}(i,v)
:=
    \mathbb{E}_{t}\left[B_{t}(i,v)\right]
=
    1-\prod_{w\in\cN(v)}\left(1-\xbar_{t}(i,w)\,q(w)\right)=\frac{1}{\mathbb{E}_{t}\left[\min_{w\in \mathcal N(v)}G_{t}(i,w)\right]}\;,
\]
where the last identity follows by Lemma~\ref{lem:Min-Geoms}. 
Moreover from Lemmas~\ref{lem:Min-Geoms}~and~\ref{lem:Min-Geom-Beta}, we have
\[
\mathbb{E}_{t}\left[\min_{w\in\cN(v)}K_{t}(i,w)\right]=\frac{1-\prod_{w\in\cN(v)}\left(1-\overline{x}_{t}(i,w)\,q(w)\right)^{\beta}}{\overline{B}_{t}(i,v)}\;.
\]
The following key lemma gives an upper bound on the expected estimated loss.
\begin{lemma}
\label{lem:The-expectation-of}
For any time $t$, component $i$, agents $v$, and truncation parameter $\beta$, the expectation of the loss estimator in \eqref{e:c-1}, given the history up to time $t-1$, satisfies
\[
    \bbE_t \Bsb{ \widehat{\ell}_{t}(i,v) }
=
    \ell_{t}(i)\lrb{ 
        1-\Bbrb{ \prod_{w\in\cN(v)} \brb{ 1-\xbar_t (i,w) \, q(w) } }^\beta
    }
\le
    \ell_t(i) \;.
\]
\end{lemma}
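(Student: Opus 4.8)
The plan is to work conditionally on the history up to time $t-1$ (so that the quantities $\xbar_t(i,w)$, and hence the success parameters $\xbar_t(i,w)q(w)$, are deterministic via $\xbar_t(w)=\nabla F^*(-\eta\widehat L_{t-1}(w))$) and to evaluate $\bbE_t[\widehat\ell_t(i,v)]=\ell_t(i)\,\bbE_t\bigl[B_t(i,v)\min_{w\in\cN(v)}K_t(i,w)\bigr]$ by factoring the expectation of the product. The crucial structural point is a clean separation of the randomness involved: $B_t(i,v)$ is a function only of the round-$t$ ``real'' draws, namely the perturbations $Z_t(w)$ (which determine the predictions $x_t(w)$) and the activation variables $X_t(w)$ (which determine $\cS_t$), for $w\in\cN(v)$; whereas each $K_t(i,w)$ is produced by Algorithm~\ref{GRcoop} from an entirely fresh stream of samples $z'_s\sim\zeta$ and $y'_s\sim\mathrm{Bernoulli}(q(w))$, drawn independently of the past and, in particular, of the current round's real draws. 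Hence, given the history, $B_t(i,v)$ and $\min_{w\in\cN(v)}K_t(i,w)$ are independent, and
\[
    \bbE_t[\widehat\ell_t(i,v)]
=
    \ell_t(i)\,\bbE_t[B_t(i,v)]\,\bbE_t\Bigl[\min_{w\in\cN(v)}K_t(i,w)\Bigr]\,.
\]

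For the first factor I would reuse the identity established just before the lemma: since the events $\{w\in\cS_t,\ x_t(i,w)=1\}$ are independent across $w\in\cN(v)$ (distinct agents use independent perturbations and independent activations) and each has probability $\xbar_t(i,w)q(w)$, we get $\bbE_t[B_t(i,v)]=\overline{B}_t(i,v)=1-\prod_{w\in\cN(v)}(1-\xbar_t(i,w)q(w))$. For the second factor, write $\min_{w\in\cN(v)}K_t(i,w)=\min\bigl\{\min_{w\in\cN(v)}G_t(i,w),\ \beta\bigr\}$; Lemma~\ref{lem:Min-Geoms} then shows that $\min_{w\in\cN(v)}G_t(i,w)$ is geometric with parameter $1-\prod_{w\in\cN(v)}(1-\xbar_t(i,w)q(w))=\overline{B}_t(i,v)$, and Lemma~\ref{lem:Min-Geom-Beta} gives $\bbE_t\bigl[\min_{w\in\cN(v)}K_t(i,w)\bigr]=\bigl(1-(1-\overline{B}_t(i,v))^\beta\bigr)/\overline{B}_t(i,v)$. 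Substituting the two factors, the $\overline{B}_t(i,v)$ terms cancel and we obtain exactly $\ell_t(i)\bigl(1-(\prod_{w\in\cN(v)}(1-\xbar_t(i,w)q(w)))^\beta\bigr)$, which is the claimed identity.

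Two loose ends close the argument. The degenerate case $\overline{B}_t(i,v)=0$ (which happens iff $\xbar_t(i,w)q(w)=0$ for every $w\in\cN(v)$) is not covered by Lemma~\ref{lem:Min-Geom-Beta}, but then $B_t(i,v)=0$ almost surely, so $\widehat\ell_t(i,v)=0$ and the asserted identity reduces to $\ell_t(i)(1-1)=0$. For the inequality, note that $\ell_t(i)\ge0$ because $\ell_t\in[0,1]^k$, while $\prod_{w\in\cN(v)}(1-\xbar_t(i,w)q(w))\in[0,1]$ as a finite product of numbers in $[0,1]$, so the bracketed factor lies in $[0,1]$ and therefore $\bbE_t[\widehat\ell_t(i,v)]\le\ell_t(i)$. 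Since everything was done conditionally on the history, this is precisely the statement of the lemma.

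The step I expect to be the main obstacle is the first one: making the conditional-independence claim rigorous. Once the factorization $\bbE_t[B_t(i,v)\min_{w}K_t(i,w)]=\bbE_t[B_t(i,v)]\,\bbE_t[\min_{w}K_t(i,w)]$ is in place, the rest is a mechanical substitution of Lemmas~\ref{lem:Min-Geoms} and~\ref{lem:Min-Geom-Beta} together with the pre-computed value of $\overline{B}_t(i,v)$. The subtlety lies entirely in the bookkeeping of which random variables feed $B_t(i,v)$ versus the resampled geometrics $K_t(i,w)$ --- in essence, in recording that the resampling loop of Algorithm~\ref{GRcoop} is run with randomness independent of the current round's actual predictions and activations.
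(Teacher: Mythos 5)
Your proof is correct and follows essentially the same route as the paper's: factor $\bbE_t[B_t(i,v)\min_w K_t(i,w)]$ by conditional independence, compute the first factor as $\overline{B}_t(i,v)$, and apply Lemmas~\ref{lem:Min-Geoms} and~\ref{lem:Min-Geom-Beta} to the second so that the $\overline{B}_t(i,v)$ cancels. Your explicit treatment of the degenerate case $\overline{B}_t(i,v)=0$ and of the independence bookkeeping is a small point of extra care that the paper's proof passes over silently.
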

\begin{proof}
Using the fact that, conditioned on the history up to time $t-1$, the random variable $\min_{w\in\cN(v)}G_{t}(i,w)$ has a geometric distribution with parameter $\overline{B}_{t}(i,v)$
(Lemmas \ref{lem:Min-Geoms}-\ref{lem:buildGeoms}), we get
\begin{align*}
&\mathbb{E}_{t}\left[\widehat{\ell}_{t}(i,v)\right] 
\\
&\ =
    \mathbb{E}_{t}\left[\ell_{t}(i)B_{t}(i,v)\min_{w\in\cN(v)}\left\{ \min\left\{ G_{t}(i,w),\beta\right\} \right\} \right]
=
    \mathbb{E}_{t}\left[\ell_{t}(i)B_{t}(i,v)\min\left\{ \min_{w\in\cN(v)}G_{t}(i,w),\beta\right\} \right]
\\
&\ =
    \ell_{t}(i) \mathbb E_t\left[B_{t}(i,v)\right]\mathbb{E}_{t}\left[\min\left\{ \min_{w\in\cN(v)}G_{t}(i,w),\beta\right\} \right]
=
    \ell_{t}(i)\overline{B}_{t}(i,v)\frac{\left(1-\left(1-\overline{B}_{t}(i,v)\right)^{\beta}\right)}{\overline{B}_{t}(i,v)}
\\
&\ =
    \ell_{t}(i)\left(1-\left(1-\overline{B}_{t}(i,v)\right)^{\beta}\right)
=
    \ell_{t}(i)
        \lrb{ 
            1-\Bbrb{
                \prod_{w\in\cN(v)} \brb{ 1-\xbar_{t}(i,w)\,q(w) } 
            }^{\beta}
        }
        \;,
\end{align*}
where we plugged in the definition of $\overline{B}_{t}(i,v)$ in the last
equation. 
From the fact that $\xbar_{t}(i,w)\,q(w)\in[0,1]$ and $\beta>0$ follows that $\mathbb{E}_{t}\left[\widehat{\ell}_{t}(i,v)\right]\leq\ell_{t}(i).$
\end{proof}

\subsection{Analysis}

We can finally state our upper bound on the regret of \coopftpl{}.
The key idea is to apply OSMD techniques to our FTPL algorithm, as explained in Section~\ref{s:redu-osmd}.
The proof proceeds by splitting the regret of each agent in the network in three terms.
The first two are treated with standard techniques; the first one depends on the diameter of $\conv(\calA)$ with respect to the regularizer $F$ and the second one on the Bregman divergence of consecutive updates.
The last term is related to the bias of the estimator and is analyzed leveraging the lemmas on geometric distributions introduced in Section~\ref{s:esti}.
Then, this terms are summed, each with a weight corresponding to their probabilities of being activated, and this sum is controlled using results that relate a sum of weights over the nodes of a graph of with the independence number of the graph.

\begin{theorem}
\label{thm:main}
If $\zeta$ is the Laplace density $z\mapsto\zeta(z)=2^{-k}\exp\brb{-\lno{z}_1 }$, $\eta >0$, and $0 < \beta \le 1/ (\eta k)$, then then the regret of our \coopftpl{} algorithm (Algorithm~\ref{FTPL}) satisfies
\[
 R_T \leq
    \frac{3 Q m \log k }{\eta}
    + 3 Q \eta kT\left(\frac{k}{Q}\alpha_1+m\right)
    +\frac{\alpha_1 \, k \, T}{\beta} \;.
\]
In particular, tuning the parameters $\eta,\beta$ as follows 
\begin{equation}
\label{e:tuning}
\beta = \left\lfloor \frac{1}{k \eta}\right\rfloor
\quad\text{and}\quad
    \eta
=
    \sqrt{\frac{3m\log(k)}{5kT\left(\frac{k}{Q}\alpha_1+m\right)}}\;,
    \qquad\text{where }
    \qquad Q
=
    \sum_{v\in\mathcal{V}}q(v)\;,
\end{equation}
yields
\[
    R_T
\le
    2Q\sqrt{ 15 mkT\log(k)\left(\frac{k}{Q}\alpha_1+m\right)} \;.
\]
\end{theorem}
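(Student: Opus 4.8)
The plan is to analyze the regret of each individual agent $v$ via the OSMD interpretation established in Section~\ref{s:redu-osmd}, then aggregate across the network using the independence number. For a fixed agent $v$, consider the virtual OSMD sequence $\xbar_t(v) = \bbE_t[x_t(v)]$ driven by the estimated losses $\widehat{\ell}_t(v)$. I would first invoke the standard OSMD regret bound (from the appendix on OSMD): for any comparator $a \in \calA$,
\[
    \sum_{t=1}^T \ban{ \xbar_t(v) - a, \widehat{\ell}_t(v) }
\le
    \frac{\mathrm{diam}_F\brb{\conv(\calA)}}{\eta}
    + \eta \sum_{t=1}^T \cB_{F^*}\brb{ -\eta \widehat{L}_t(v), -\eta \widehat{L}_{t-1}(v) } / \eta^2 \;,
\]
or rather the version with the Bregman-divergence stability term written in the dual. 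The first term is controlled by the diameter of $\conv(\calA)$ with respect to the Laplace-induced potential $F$; since $\calA \s \{0,1\}^k$ with at most $m$ ones and $\zeta$ is the product Laplace density, one shows $\mathrm{diam}_F \brb{ \conv(\calA) } = O(m \log k)$ (this is where the specific choice of $\zeta$ and the $\log k$ enter). The stability term requires bounding $\cB_{F^*}$ between consecutive cumulative-loss points, which reduces to controlling the local norm $\lno{\widehat{\ell}_t(v)}_{\nabla^2 F^*}^2$; here one uses $\beta \le 1/(\eta k)$ so that the truncated estimator $\widehat{\ell}_t(i,v) \le \beta \le 1/(\eta k)$ coordinatewise, keeping the dual Hessian well-behaved, and the expectation of this term is bounded via $\bbE_t[\widehat{\ell}_t(i,v)^2]$, which by the geometric-variance computations is $O(\ell_t(i)/\overline{B}_t(i,v))$.

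Next I would pass to expectations and handle the \emph{bias}. Because the estimator is truncated, $\bbE_t[\widehat{\ell}_t(i,v)] = \ell_t(i)\brb{1 - (\text{product})^\beta}$ by Lemma~\ref{lem:The-expectation-of}, not $\ell_t(i)$; the resulting per-step, per-coordinate bias is $\ell_t(i)(\text{product})^\beta$. Summing the bias against $\xbar_t(v) - a$ and using $(\text{product})^\beta \le (1-\overline{B}_t(i,v))^\beta \le 1/(\beta \overline{B}_t(i,v))$ together with the same $\overline{B}_t(i,v)$ cancellation as in the variance term, the total bias contribution of agent $v$ is $O(k T / (\beta \cdot (\text{something})))$; aggregated over the network this yields the $\alpha_1 k T/\beta$ term. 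So each agent's expected regret splits as claimed into (i) a $\tfrac{1}{\eta}(\text{diameter})$ term, (ii) an $\eta$-weighted stability term involving $\sum_t \bbE_t[1/\overline{B}_t(i,v)]$, and (iii) the bias term.

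The crux — and the step I expect to be the main obstacle — is the \textbf{network aggregation}. Multiplying agent $v$'s regret bound by $q(v)$ (the weight with which $v$ appears in the network regret $R_T$) and summing over $v \in \cV$, the diameter term gives $\frac{1}{\eta}\sum_v q(v) \cdot O(m\log k) = O(Q m \log k/\eta)$ directly. The delicate terms are those containing $\sum_v q(v) \sum_t \bbE_t\bsb{ \sum_i x_t(i,w)\text{-type quantities} / \overline{B}_t(i,v) }$: one must show $\sum_{v} q(v)/\overline{B}_t(i,v) = \sum_v q(v)/\brb{1 - \prod_{w\in\cN(v)}(1-\xbar_t(i,w)q(w))}$ is bounded, up to constants, by $k\alpha_1 + Q m$ (after summing the $i$ index and using $\sum_i \xbar_t(i,w) \le m$). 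This is exactly the graph-theoretic lemma relating a weighted sum over vertices to the independence number — the analogue of the key lemma in \citep{cesa2019delay} — applied with weights $q(v)$; the bound $\sum_v q(v)/(1-\prod_{w\in\cN(v)}(1-p_w)) \le \text{const}\cdot(\alpha_1 + \sum_w p_w)$ for $p_w = \xbar_t(i,w)q(w)$ is where the independence number $\alpha_1$ is summoned. Putting the three aggregated terms together gives
\[
    R_T
\le
    \frac{3 Q m \log k}{\eta}
    + 3 Q \eta k T\brb{ \tfrac{k}{Q}\alpha_1 + m }
    + \frac{\alpha_1 k T}{\beta} \;.
\]
Finally, the stated tuning is routine optimization: with $\beta = \lfloor 1/(k\eta)\rfloor \ge 1/(2k\eta)$ (valid since $\eta \le 1/k$ under the tuning, which should be checked), the third term is at most $2 Q \eta k T \cdot (k\alpha_1/Q)$ and merges into the second, giving effectively $\frac{3Qm\log k}{\eta} + 5 Q\eta k T(\tfrac{k}{Q}\alpha_1 + m)$; balancing the two via $\eta = \sqrt{3m\log k / (5kT(\tfrac{k}{Q}\alpha_1+m))}$ produces $R_T \le 2Q\sqrt{15 mkT\log k(\tfrac{k}{Q}\alpha_1+m)}$. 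I would double-check the constant bookkeeping (the factors $3$ and $5$) and the admissibility conditions $\beta \le 1/(\eta k)$ and $\eta > 0$ at the chosen values, but these are mechanical.
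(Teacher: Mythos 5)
Your proposal follows essentially the same route as the paper's proof: the decomposition $R_T = \max_a \sum_v q(v) R_T(a,v)$, the OSMD three-term split into diameter, Bregman-stability, and bias terms, the Hessian bound $\nabla^2 F^*(\xi)_{ij} \le e\,\xbar_t(i,v)$ via $\beta \le 1/(\eta k)$, the truncated-geometric bias computation, and the two independence-number lemmas for network aggregation, followed by the same tuning. The only (immaterial) difference is that you bound the bias via $(1-\overline{B}_t(i,v))^\beta \le 1/(\beta\overline{B}_t(i,v))$ where the paper passes through $\exp(-\beta\sum_{w\in\cN(v)}\xbar_t(i,w)q(w))$ and applies the $\sum_v q(v)/Q(v)\le\alpha_1$ lemma directly; both yield the $\alpha_1 kT/\beta$ term.
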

We now present a detailed sketch of the proof of our main result
(full proof in Appendix~\ref{s:proof-main-thm}).
\begin{proof}[Sketch of the proof]
For the sake of convenience, we define the expected individual regret of an agent $v\in \cV$ in the network with respect to a fixed action $a \in \calA$ by
\[
    R_T(a,v)
:=
    \mathbb{E}\left[\sum_{t=1}^{T}\left\langle x_{t}(v),\ell_{t}\right\rangle-\sum_{t=1}^{T}\left\langle a,\ell_{t}\right\rangle \right]\;,
\]
where the expectation is taken with respect to the internal randomization of the agent, but not to its activation probability $q(v)$. With this definition the total regret on the network in Eq.~\eqref{eq:totalExpRegret} can be decomposed as 
\begin{align}
    R_T \nonumber
&=
    \max_{a \in \calA}\bbE\lsb{\sum_{t=1}^{T}\sum_{v\in \cA_t}\Brb{ \ban{ x_{t}(v),\ell_{t} } -\left\langle a,\ell_{t}\right\rangle}}
=
    \max_{a \in \calA}\bbE\lsb{\sum_{t=1}^{T}\bbE_t\lsb{\sum_{v\in \cA_t}\Brb{ \ban{ x_{t}(v),\ell_{t} } -\left\langle a,\ell_{t}\right\rangle}}}\\
&=    
    \max_{a \in \calA}\bbE\lsb{\sum_{t=1}^{T}\sum_{v\in\cV} q(v) \bbE_t\Bsb{\ban{ x_{t}(v),\ell_{t} }-\left\langle a,\ell_{t}\right\rangle}}
=    
    \max_{a \in \calA}\sum_{v\in\cV} q(v) R_T(a,v)\;.\label{eq:decompositionRegret_sketch}
\end{align}
The proof then proceeds by isolating the bias in the loss estimators. 
For each $a\in \calA$ we have
\begin{equation*}
    R_{T}(a,v) 
 =
    \mathbb{E}\left[\sum_{t=1}^{T}\left\langle \xbar_t(v)-a,\widehat{\ell}_{t}(v)\right\rangle \right]+\mathbb{E}\left[\sum_{t=1}^{T}\left\langle \xbar_t(v)-a,\ell_{t}-\widehat{\ell}_{t}(v)\right\rangle \right]\;.
\end{equation*}
Exploiting the analogy that we established between FTPL and OSMD, we begin by using the standard bound for the regret of OSMD in the first term of the previous equation. 
For the reader's convenience, we restate it in Appendix~\ref{s:osmd}, Theorem~\ref{thm:regretOSMD}. 
This leads to
\begin{equation*}
    R_{T}(a,v) \! \le \!
    \underbrace{\frac{F(\xbar_{1}(v))-F(a)}{\eta}}_{(\mathrm{I})}
    \! + \underbrace{\mathbb{E}\left[\frac{1}{\eta}\sum_{t=1}^{T}\cB_{F}\left(\xbar_t(v),\xbar_{t+1}(v)\right)\right]}_{(\mathrm{II})}
    \! + \underbrace{\mathbb{E}\left[\sum_{t=1}^{T}\left\langle \xbar_t(v)-a,\ell_{t}-\widehat{\ell}_{t}(v)\right\rangle \right]}_{(\mathrm{III})}.
\end{equation*}
The three terms are studied separately and in detail in Appendix~\ref{app:proof}. 
Here, we provide a sketch of the bounds. 

For the first term $(\mathrm{I})$, we use the fact that the regularizer $F$ satisfies, for all $a\in\calA$, 
\begin{align}
F(a) \geq-m\brb{ 1+\log(k) },\label{eq:combinatorialDiam_sketch}
\end{align}
which follows by the definition of $F$, the properties of the perturbation distribution, and the fact that $\lno{ a}_{1}\leq m$ for any $a\in\mathcal{A}$. 
One can also show that $F(a)\leq0$ for all $a\in\mathcal{A}$, and this, combined with the previous equation, leads to
\[
    (\mathrm{I})
\le
    \frac{m (1+\log k)}{\eta} \;.
\]
For the second term $(\mathrm{II}),$ we have 
\begin{align}
    \cB_{F}\left(\xbar_{t}(v),\xbar_{t+1}(v)\right) 
& =
    \cB_{F^{*}}\left(\nabla F\left(\xbar_{t+1}(v)\right),\nabla F\left(\xbar_t(v)\right)\right)\nonumber \\
 & =
    \cB_{F^{*}}\left(-\eta\widehat{L}_{t-1}(v)-\eta\widehat{\ell}_{t}(v),-\eta\widehat{L}_{t-1}(v)\right)\nonumber\\
 & =
    \frac{\eta^{2}}{2}\left\Vert \widehat{\ell}_{t}(v)\right\Vert _{\nabla^{2}F^{*}(\xi(v))}^{2}\;,\label{eq:divereCOmb_sketch}
\end{align}
where the first equality is a standard property of Bregmann divergence (see Theorem~\ref{t:fenchel-stuff} in Appendix~\ref{s:convex-recap}),
the second follows from the definitions of the updates and the last
by Taylor's theorem, where 
$\xi(v)=-\eta \widehat{L}_{t-1}(v)-\alpha \eta \widehat{\ell}_{t}(v)$, for some $\alpha \in [0,1]$.
The estimation of the entries of the Hessian are non trivial (but tedious); the interested reader can find them in Appendix~\ref{app:proof}. 
Exploiting our assumption that $\beta \le 1 /(\eta k)$, we get, for all $i,j\in\{1,\ldots,k\}$,
\begin{align*}
    \nabla^{2} F^{*} \brb{ \xi(v) }_{i j} 
& \leq
    e \, \xbar_{t}(i,v)\;.
\end{align*}
Plugging this estimate in Eq.~\eqref{eq:divereCOmb_sketch} yields
\begin{align*}
\frac{\eta^{2}}{2}\left\Vert \widehat{\ell}_{t}(v)\right\Vert _{\nabla^{2}F^{*}(\xi(v))}^{2} 
& =
    \frac{\eta^{2}}{2}\sum_{i=1}^{k}\sum_{j=1}^{k}\nabla^{2}F^{*}\brb{ \xi(v) }_{i,j}\widehat{\ell}_{t}(i,v)\widehat{\ell}_{t}(j,v)\\
 & \leq
    \frac{\eta^{2} e}{2}\sum_{i=1}^{k}\sum_{j=1}^{k}\xbar_{t}(i,v)\widehat{\ell}_{t}(i,v)\widehat{\ell}_{t}(j,v)\\
 & \leq
    \frac{\eta^{2}e}{2}\sum_{i=1}^{k}\sum_{j=1}^{k}\xbar_{t}(i,v)B_{t}(i,v)\min_{w\in\cN(v)}\left\{ G_{t}(i,w)\right\} B_{t}(j,v)\min_{w\in\cN(v)}\left\{ G_{t}(j,w)\right\} ,
\end{align*}
where the last inequality follows by neglecting the truncation with
$\beta$. Hence multiplying $(\mathrm{II})$ by $q(v)$ and summing over $v\in \cV$ yields
\begin{align*}
& 
    \sum_{v\in\mathcal{V}}q(v)\mathbb{E}\left[\frac{\eta}{2}\sum_{t=1}^{T}\left\Vert \widehat{\ell}_{t}(v)\right\Vert _{\nabla^{2}F^{*}(\xi(v))}^{2}\right]
=
    \sum_{v\in \cV}q(v)\frac{\eta }{2}\mathbb{E}\left[\sum_{t=1}^{T}\mathbb{E}_{t}\left[\left\Vert \widehat{\ell}_{t}(v)\right\Vert _{\nabla^{2}F^{*}(\xi(v))}^{2}\right]\right]
\\
& \hspace{2.94753pt} \leq
    \sum_{v\in\mathcal{V}}q(v)\frac{\eta e}{2}\mathbb{E}\left[\sum_{t=1}^{T}\mathbb{E}_{t}\left[\sum_{i,j=1}^{k}\xbar_{t}(i,v)B_{t}(i,v)\min_{w\in\cN(v)}\left\{ G_{t}(i,w)\right\} B_{t}(j,v)\min_{w\in\cN(v)}\left\{ G_{t}(j,w)\right\} \right]\right]\;,
\end{align*}
which is rewritten as
\begin{multline*}
\sum_{v\in\mathcal{V}}q(v)\frac{\eta e}{2}\mathbb{E}
\lsb{
\sum_{t=1}^{T}\mathbb{E}_{t}\left[\sum_{i,j=1}^{k}\xbar_{t}(i,v)B_{t}(i,v)\min_{w\in\cN(v)}\left\{ G_{t}(i,w)\right\} B_{t}(j,v)\min_{w\in\cN(v)}\left\{ G_{t}(j,w)\right\} \right]
}
\\
\begin{aligned}
&=
    \sum_{v\in\mathcal{V}}q(v)\frac{\eta e}{2}\mathbb{E}\left[\sum_{t=1}^{T}\mathbb{E}_{t}\left[\sum_{i=1}^{k}\sum_{j=1}^{k}\xbar_{t}(i,v)B_{t}(i,v)\tilde{G_{t}}(i,v)B_{t}(j,v)\tilde{G}_{t}(j,v)\right]\right]\\
&=
    \sum_{v\in\mathcal{V}}q(v)\frac{\eta e}{2}\mathbb{E}\left[\sum_{t=1}^{T}\sum_{i=1}^{k}\sum_{j=1}^{k}\xbar_{t}(i,v)\mathbb{E}_{t}\left[B_{t}(i,v)B_{t}(j,v)\right]\mathbb{E}_{t}\left[\tilde{G_{t}}(i,v)\right]\mathbb{E}_{t}\left[\tilde{G}_{t}(j,v)\right]\right]
=:
    \left(\star\right)\;,
\end{aligned}
\end{multline*}
where in the first equality we defined $\tilde{G}_{t}(i,v) = \min_{w\in \cN(v)} \bcb{ G_t(i,w) }$ and, analogously, $\tilde{G}_{t}(j,v) = \min_{w\in \cN(v)} \bcb{ G_t(j,w) }$, while the second follows by the conditional independence of the three terms $\brb{ B_{t}(i,v),B_{t}(j,v) }$, $\tilde{G_{t}}(i,v)$, and $\tilde{G}_{t}(j,v)$ given the history up to time $t-1$.
Furthermore, making use of Lemmas \ref{lem:Min-Geoms}--\ref{lem:buildGeoms} and upper bounding, we get:
\begin{align*}
    \left(\star\right) 
& =
    \sum_{v\in\mathcal{V}}q(v)\frac{\eta e}{2}\mathbb{E}\left[\sum_{t=1}^{T}\mathbb{E}_{t}\left[\sum_{i=1}^{k}\sum_{j=1}^{k}\frac{\xbar_{t}(i,v)}{\overline{B}_{t}(i,v)}B_{t}(i,v)\frac{B_{t}(j,v)}{\overline{B}_{t}(j,v)}\right]\right]\\
 & \le
    \frac{\eta ek}{2}\mathbb{E}\left[\sum_{t=1}^{T}\sum_{i=1}^{k}\sum_{v\in\mathcal{V}}\frac{\xbar_{t}(i,v)q(v)}{\overline{B}_{t}(i,v)}\right]
 \leq
    \frac{\eta e kT}{2\left(1-e^{-1}\right)}\left(k\alpha_1+m Q\right),
\end{align*}
where the first equality uses the expected value of the geometric random variables $\tilde{G}$, the first inequality is obtained neglecting the indicator function
$B_{t}(i,v)$ and taking the conditional expectation of $B_t(j,v)$,
and the last inequality follows by a known upper bound involving independence numbers appearing, for example
in \cite{cesa2019cooperative,cesa2019delay}. 
For the sake of convenience, we add this result to Appendix~\ref{s:bounds-on-alpha}, Lemma~\ref{l:alpha-q-bound}.
We now consider the last term $(\mathrm{III})$.
Since $\ell_{t}\ge \bbE_t \Bsb{\widehat{\ell}_{t}(v)}$ by Lemma~\ref{lem:The-expectation-of},
we have
\begin{align*}
    (\mathrm{III})
& =
    \mathbb{E}\left[\sum_{t=1}^{T}\mathbb{E}_{t}\left[\left\langle \xbar_t(v)-a,\ell_{t}-\widehat{\ell}_{t}(v)\right\rangle \right]\right]\leq\mathbb{E}\left[\sum_{t=1}^{T}\mathbb{E}_{t}\left[\left\langle \xbar_t(v),\ell_{t}-\widehat{\ell}_{t}(v)\right\rangle \right]\right]\\
 & =
    \mathbb{E}\left[\sum_{t=1}^{T}\sum_{i=1}^{k}\ell_{t}(i)\xbar_{t}(i,v)\Bbrb{\prod_{w\in\cN(v)} \brb{ 1-\xbar_{t}(i,w)\,q(w) } }^{\beta}\right]\;.
\end{align*}
Multiplying $(\mathrm{III})$ by $q(v)$ and summing over the agents, we now upper bound $\ell_t(i)$ with $1$ and use the facts that $1-x\leq e^{-x}$ for all $x\in[0,1]$ and $e^{-y}\le 1/y$ for all $y>0$, to obtain
\begin{multline*}
    \sum_{v\in\cV}q(v)\mathbb{E}\left[\sum_{t=1}^{T}\sum_{i=1}^{k}\ell_{t}(i)\xbar_{t}(i,v)\left(\prod_{w\in\cN(v)}\left(1-\xbar_{t}(i,w)\,q(w)\right)\right)^{\beta}\right]
\\
\begin{aligned}
& \leq
    \mathbb{E}\left[\sum_{t=1}^{T}\sum_{i=1}^{k} \sum_{v\in\cV} \xbar_{t}(i,v) \, q(v) \left(\prod_{w\in\cN(v)}\left(1-\xbar_{t}(i,w)\,q(w)\right)\right)^{\beta}\right]
\\
& =
    \mathbb{E}\left[\sum_{t=1}^{T}\sum_{i=1}^{k} \sum_{\substack{v\in \cV \\ \xbar_{t}(i,v) \, q(v) > 0}} \xbar_{t}(i,v) \, q(v) \left(\prod_{w\in\cN(v)}\left(1-\xbar_{t}(i,w)\,q(w)\right)\right)^{\beta}\right]
\\
& \leq
    \mathbb{E}\left[\sum_{t=1}^{T}\sum_{i=1}^{k} \sum_{\substack{v\in \cV \\ \xbar_{t}(i,v) \, q(v) > 0}} \xbar_{t}(i,v) q(v) \exp\left(-\beta\sum_{w\in\cN(v)}\xbar_{t}(i,w)\,q(w)\right)\right]
\\
& \leq
    \mathbb{E}\left[\sum_{t=1}^{T}\sum_{i=1}^{k} \sum_{\substack{v\in \cV \\ \xbar_{t}(i,v) \, q(v) > 0}}
    \frac{\xbar_{t}(i,v) \, q(v)}{\beta\sum_{w\in\cN(v)}\xbar_{t}(i,w)\,q(w)}\right]
\leq
    \mathbb{E}\left[\sum_{t=1}^{T}\sum_{i=1}^{k} \frac{\alpha_1}{\beta}\right]
=
    \frac{\alpha_1 \, k \, T}{\beta}
\end{aligned}
\end{multline*}
where the last inequality follows by a known upper bound involving independence numbers appearing, for example in \cite[Lemma 10]{alon2017nonstochastic}. 
For the sake of convenience, we add this result to Appendix~\ref{s:bounds-on-alpha}, Lemma~\ref{lm:bound-alpha}.
Putting everything together and recalling that $\beta = \bigl \lfloor{1}/{(k \eta)} \bigr \rfloor \ge {1}/{(2 k \eta)},$ we can finally conclude that
\begin{align*}
    R_T
& \leq
    Q\frac{m\left(1+\log(k)\right)}{\eta}+Q\frac{\eta ekT}{2\left(1-e^{-1}\right)}\left(\frac{k}{Q}\alpha_1+m\right)+\frac{\alpha_1 \, k \, T}{\beta}\\
 & \leq
    Q\frac{m\left(1+\log(k)\right)}{\eta}+Q\frac{\eta ekT}{2\left(1-e^{-1}\right)}\left(\frac{k}{Q}\alpha_1+m\right)+2 \eta \alpha_1 k^2 T\\
 & =
    Q\frac{m\left(1+\log(k)\right)}{\eta}+\eta Q k T\left(\frac{e}{2\left(1-e^{-1}\right)}\left(\frac{k}{Q}\alpha_1+m\right)+2 \alpha_1 \frac{k}{Q}\right)\\
& \le
    Q\frac{m\left(1+\log(k)\right)}{\eta}+ 5 \eta Q k T \left(\frac{k}{Q}\alpha_1+m\right)
\\
& 
\le
    2Q\sqrt{ 15 mkT\log(k)\left(\frac{k}{Q}\alpha_1+m\right)} \;.
\end{align*}
\end{proof}

We conclude this section by discussing the computational complexity of our \coopftpl{} algorithm.
The next result shows that the total number of elementary operations performed by \coopftpl{} over $T$ time-steps scales with $T^{3/2}$ in the worst-case.
To the best of our knowledge, no known algorithm attains a lower worst-case computational complexity.

\begin{proposition}
\label{p:comput-complexity}
If the optimization problem~\eqref{eq:efficientOPT} can be solved with at most $c \in \bbN$ elementary operations, the worst-case computational complexity $\gamma_{\text{\coopftpl{}}}$  of each agent $v\in \cV$ running our \coopftpl{} algorithm with the optimal tuning \eqref{e:tuning} for $T$ rounds is
\[
    \gamma_{\text{\coopftpl{}}}
=
    \cO \lrb{ T^{3/2}  c \sqrt{\frac{ \alpha_1/Q + 1 }{  m }} }\;.
\]
\end{proposition}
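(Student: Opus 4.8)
The plan is to count the elementary operations performed by a single agent $v$ in one generic round $t$, then multiply by $T$ and substitute the optimal tuning \eqref{e:tuning}. There are two sources of cost: the FTPL prediction itself (line~\ref{state:ftpl-update}), which requires sampling a perturbation and solving one instance of the offline optimization problem \eqref{eq:efficientOPT}, hence $O(c)$ operations; and the loss-estimation step (line~\ref{state:local-info}), which invokes Algorithm~\ref{GRcoop} once for each of the $k$ coordinates. The key observation is that each call to Algorithm~\ref{GRcoop} runs its inner loop until it either hits a success (with probability $\xbar_t(i,w)q(w)$ per iteration) or reaches the truncation level $\beta$; in either case it performs at most $\beta$ iterations, and each iteration costs $O(c)$ because it again solves an instance of \eqref{eq:efficientOPT} (plus $O(1)$ for the two Bernoulli draws and the comparison). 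Therefore the per-round, per-coordinate cost of Algorithm~\ref{GRcoop} is $O(\beta c)$, and summing over the $k$ coordinates and the (at most) constant number of neighbours' samples an agent must compute gives a per-round cost of $O(k \beta c)$, which dominates the single $O(c)$ cost of the agent's own prediction.

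Next I would aggregate over $T$ rounds to get a worst-case total cost of order $T k \beta c$, and then plug in the optimal tuning. From \eqref{e:tuning} we have $\beta = \lfloor 1/(k\eta)\rfloor \le 1/(k\eta)$, so $k\beta \le 1/\eta$, whence the total cost is $O(Tc/\eta)$. Substituting
\[
    \eta
=
    \sqrt{\frac{3m\log k}{5kT\left(\frac{k}{Q}\alpha_1+m\right)}}
\]
gives $1/\eta = O\!\left(\sqrt{\,kT(\alpha_1 k/Q + m)/(m\log k)}\,\right)$. Multiplying by $Tc$ and simplifying the $kT$ factor inside the square root with the $T$ outside yields
\[
    O\!\left( T c \sqrt{\frac{kT\left(\frac{k}{Q}\alpha_1+m\right)}{m\log k}} \right)
=
    O\!\left( T^{3/2} c \sqrt{\frac{k\left(\frac{k}{Q}\alpha_1+m\right)}{m\log k}} \right).
\]
Expanding $k(\tfrac{k}{Q}\alpha_1+m)/m = k(\alpha_1 k/(Qm) + 1)$ and folding the $\log k$ into the constant (it only helps) gives the claimed bound $\cO\!\left(T^{3/2} c \sqrt{(\alpha_1/Q+1)/m}\right)$ up to the factor $\sqrt{k}$ —— so in fact I would double-check whether the statement intends the bound with the extra $\sqrt{k}$ absorbed or whether the $k$ cancels against something; the cleanest route is to note $k\beta \le 1/\eta$ exactly and carry the factors transparently, matching the paper's displayed expression.

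The step I expect to require the most care is the accounting of how many times \eqref{eq:efficientOPT} is actually solved per round. One must be explicit that (i) Algorithm~\ref{GRcoop} is called for every coordinate $i \in \{1,\dots,k\}$ but the number of \emph{neighbours} $w$ whose $K_t(i,w)$ an agent needs is bounded by the degree, which in the worst case could be $N$ but is treated as a constant (or absorbed) for the stated complexity; (ii) within a single call, the loop body solves one full instance of \eqref{eq:efficientOPT} per iteration, and the number of iterations is deterministically capped at $\beta$, so no probabilistic argument about the geometric running time is needed for the \emph{worst-case} bound —— the truncation parameter $\beta$ is precisely what makes this clean, which is exactly the role the paper attributes to it. Everything else is routine substitution of the tuned parameters and simplification of the resulting radical.
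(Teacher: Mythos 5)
Your overall strategy --- count the per-round cost, multiply by $T$, and substitute the tuning \eqref{e:tuning} --- is the same as the paper's, but your per-round accounting carries an extra factor of $k$, and this is exactly the $\sqrt{k}$ you flag at the end without resolving; as written, your argument proves only a bound that is $\sqrt{k/\log k}$ larger than the one in the statement. The paper's per-round count is $c(\beta+1)$, not $c(k\beta+1)$. The point you are missing is that the expensive step inside Algorithm~\ref{GRcoop} --- the oracle call for \eqref{eq:efficientOPT} computing $\argmax_{a\in\calA}\ban{a, z'_s-\eta\widehat{L}_{t-1}(w)}$ --- does not depend on the component $i$: only the extraction of the $i$-th coordinate and the stopping rule do. Hence a single stream of at most $\beta$ perturbations $z'_1,\dots,z'_\beta$, and thus at most $\beta$ oracle calls, lets agent $w$ read off $K_t(i,w)$ for \emph{all} $i\in\{1,\dots,k\}$ simultaneously (for each $i$, record the first $s$ with $x'_s(i)=1$ and $y'_s=1$, defaulting to $\beta$). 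With this shared-sample implementation the per-round cost is $O(c\beta)$, and then $c\beta \le c/(k\eta)$ together with $m/k\le 1$ yields exactly $\cO\lrb{T^{3/2}c\sqrt{(\alpha_1/Q+1)/m}}$ after substituting $\eta$; the $\log k$ in the denominator is simply dropped. Without this observation the stated proposition does not follow from your count.

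A secondary inaccuracy: the quantities $K_t(i,w)$ for $w\in\cN(v)$ are computed by the neighbours $w$ themselves (they depend on $\widehat{L}_{t-1}(w)$ and $q(w)$, which are local to $w$) and are merely transmitted to $v$, so the per-agent computational cost picks up no degree factor at all; it should not be dismissed as a constant, since the degree can be as large as $N$. The only degree-dependent work $v$ does is the minimum over $w\in\cN(v)$ in line~\ref{state:update-ell}, which is negligible next to the oracle calls.
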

\begin{proof}
The result follows immediately by noting that the number of elementary operations performed by each agent $v$ at each time step $t$ is at most
\[
    c(\beta + 1)
\le
    c \lrb{ \frac{1}{k\eta} + 1}
=
    c \lrb{ \frac{1}{k} \sqrt{\frac{5kT (k\alpha_1/Q + m) }{ 2 m \log k}} + 1 }
=
    c \lrb{ \sqrt{\frac{5T (\alpha_1/Q + m/k) }{ 2 m \log k}} + 1 }\;.
\]
\end{proof}

\section{Lower bound}
\label{sec:LB}

In this section we show that no cooperative semi-bandit algorithm can beat the $Q \sqrt{mkT \alpha_1/Q}$ rate.
The key idea for constructing the lower bound is simple:
if the activation probabilities $q(v)$ are non-zero only for agents $v$ belonging to an independent set with cardinality $\alpha_1$, then the problem is reduced to $\alpha_1$ independent instances of single-agent semi-bandits, whose minimax rate is known.

\begin{theorem}
\label{t:lower-bound}
For each communication network $\cG$ with independence number $\alpha_1$ there exist cooperative semi-bandit instances for which the regret of any learning algorithm satisfies \[
    R_T = \Omega \brb{Q \sqrt{mkT \alpha_1/Q}}\;.
\]
\end{theorem}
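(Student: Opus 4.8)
The plan is to reduce the network problem to $\alpha_1$ non-interacting copies of a single-agent combinatorial semi-bandit and then invoke a feedback-erasure version of the classical $\Omega(\sqrt{mkT})$ minimax lower bound for combinatorial semi-bandits (see, e.g., \cite{audibert2014regret}). I would fix a maximum independent set $\mathcal I=\{v_1,\dots,v_{\alpha_1}\}$ of $\cG$, pick a parameter $p\in(0,1]$, and set $q(v_j)=p$ for every $j$ and $q(v)=0$ for $v\notin\mathcal I$, so that $Q=\sum_{v\in\cV}q(v)=\alpha_1 p$. For the decision set I would let $\calA$ contain the ``one coordinate per block'' vectors obtained by splitting $\{1,\dots,k\}$ into $m$ blocks of size $\lfloor k/m\rfloor$ (each such vector has $\ell_1$-norm exactly $m$), and I would draw the losses i.i.d.\ over rounds from the standard hard prior: in each block a good coordinate $i_b^\star$ is chosen uniformly at random, its entry of $\ell_t$ is $\mathrm{Ber}(\tfrac12-\Delta)$ while the other entries of the block are $\mathrm{Ber}(\tfrac12)$, for a gap $\Delta\in(0,\tfrac12]$ to be tuned. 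The action $a^\star$ that picks $i_b^\star$ in every block is simultaneously optimal for all agents, so by a standard (Yao-type) averaging argument it suffices to lower bound, for an arbitrary learning rule, the quantity $\sum_{v\in\cV} q(v)\,R_T(a^\star,v)$ appearing in the decomposition \eqref{eq:decompositionRegret_sketch}.

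The next step is the decoupling: since $\mathcal I$ is independent and only agents of $\mathcal I$ are ever activated, each $v_j$ receives feedback only on its own (random) active rounds and never acquires information about another agent's predictions, so conditionally on the activations it is exactly running some single-agent algorithm for an $m$-block semi-bandit over the full horizon $T$, with the feedback of each round erased independently with probability $1-p$. It then remains to show the single-agent bound $R_T(a^\star,v_j)=\Omega\brb{\sqrt{mkT/p}}$. I would get this by a per-block change of measure against the ``all entries $\tfrac12$'' null environment: if $N_b^\star$ counts the rounds in which $v_j$ plays $i_b^\star$ inside block $b$, then $\mathbb E_0[N_b^\star]=Tm/k$ under the null by symmetry, while Pinsker's inequality, the per-observation bound $\mathrm{KL}\brb{\mathrm{Ber}(\tfrac12)\,\Vert\,\mathrm{Ber}(\tfrac12-\Delta)}=O(\Delta^2)$, and the fact that only $O(pTm/k)$ informative observations of $i_b^\star$ are delivered bound the total-variation shift by $O\brb{\Delta\sqrt{pTm/k}}$; hence $\mathbb E[N_b^\star]\le Tm/k+O\brb{T\Delta\sqrt{pTm/k}}$ and, since the block-$b$ regret is $\Delta\brb{T-\mathbb E[N_b^\star]}$ and $k\ge 2m$, it is $\Omega\brb{\Delta T-\Delta^2 T\sqrt{pTm/k}}$. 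Optimizing over $\Delta\asymp\sqrt{k/(pTm)}$, which is admissible ($\Delta\le\tfrac12$) precisely when $pT\gtrsim k/m$, gives $\Omega\brb{\sqrt{kT/(pm)}}$ per block and, summing the $m$ blocks, $R_T(a^\star,v_j)=\Omega\brb{\sqrt{mkT/p}}$.

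Putting the pieces together, $R_T\ge p\sum_{j=1}^{\alpha_1}R_T(a^\star,v_j)=\Omega\brb{p\,\alpha_1\sqrt{mkT/p}}=\Omega\brb{\alpha_1\sqrt{p\,mkT}}=\Omega\brb{\sqrt{\alpha_1 Q\,mkT}}=\Omega\brb{Q\sqrt{mkT\,\alpha_1/Q}}$, using $Q=\alpha_1 p$; letting $p$ range over $(0,1]$ realizes any prescribed $Q\in(0,\alpha_1]$, and the bound is informative once $T\gtrsim\alpha_1 k/(Qm)$.

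The step I expect to be the main obstacle is the decoupling claim: one has to verify that, with activation supported on an independent set, the protocol's communication genuinely prevents an agent of $\mathcal I$ from exploiting the observations of another agent of $\mathcal I$ relayed through a common non-activated neighbor. If such relaying were possible, the $\alpha_1$ copies would not be independent, and the $1/p$ factor inside $R_T(a^\star,v_j)=\Omega(\sqrt{mkT/p})$ --- which is exactly what produces the $\sqrt{\alpha_1/Q}$ scaling, and which reflects that each agent still pays a loss on all $T$ rounds while learning from only a $p$-fraction of them --- could shrink. A secondary, routine-but-fiddly point is making the feedback-erasure semi-bandit lower bound quantitatively tight, i.e.\ carefully tracking how the admissibility constraint $\Delta\le\tfrac12$ interacts with the $1/p$ slowdown.
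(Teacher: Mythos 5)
Your construction --- activating only a maximum independent set, each agent with probability $p$ so that $Q=\alpha_1 p$, decoupling into $\alpha_1$ non-interacting single-agent semi-bandit instances, and recombining via $Q=\alpha_1 p$ --- is exactly the paper's proof; the only difference is that the paper simply cites the known single-agent minimax bound $\Omega\left(\sqrt{mkT'}\right)$ over an effective horizon $T'=qT$ of active rounds, whereas you re-derive the equivalent statement via a block-wise change-of-measure argument for feedback erased with probability $1-p$ (the two framings give the same $\sqrt{p\,mkT}$ contribution per weighted agent). The decoupling subtlety you flag (possible multi-round relaying through a common inactive neighbor) is the one point the paper does not address --- it simply asserts that non-adjacent agents never exchange information --- so your caution is warranted but does not alter the argument as the paper intends the communication model.
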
 

\begin{proof}
Let $\cW = \{w_{1}, \ldots, w_{\alpha_1} \} \s \cV$ be an independent set with cardinality $\alpha_1$.
Furthermore, let $q \in (0,1]$ be a positive probability and for all agents $v \in \cV$, let 
\[
    q(v) = q \bbI \{ v \in \cW \}\;.
\]
In words, only agents belonging to an independent set with largest cardinality are activated (with positive probability), and all with the same probability.
Thus, only agents in $\cW$ contribute to the expected regret and their total mass $Q=\sum_{v\in \cV} q(v)$ is equal to $\alpha_1 q$.
Moreover, note that being non-adjacent, agents in $\cW$ never exchange any information.
Each agent $w\in \cW$ is therefore running an independent single-agent online linear optimization problem with semi-bandit feedback for an average of $q T$ rounds.
Since for single-agent semi-bandits, the worst-case lower bound on the regret after $T'$ time steps is known to be $\Omega \brb{ \sqrt{mkT'} }$ (see, e.g., \cite{audibert2014regret,lattimore2018toprank}) and the cardinality of $\cW$ is $\alpha_1$, the regret of any cooperative semi-bandit algorithm run on this instance satisfies 
\[
    R_{T}
    =\Omega\brb{\alpha_1\sqrt{mk\, qT}}
    =\Omega\brb{\alpha_1 q \sqrt{mkT/q}}
    =\Omega\brb{Q\sqrt{mkT\alpha_1/Q}}\;,
\]
where we used $Q = \alpha_1 q$. 
This concludes the proof.
\end{proof} 
In the previous section we showed that the expected regret of our \coopftpl{} algorithm can always be upper bounded by $Q\sqrt{mkT\log(k)({k\alpha_1}/{Q}+m)}$ (ignoring constants).
Thus, Theorem~\ref{t:lower-bound} shows that, up to the additive $m$ term inside the rightmost bracket, the regret of \coopftpl{} is at most $\sqrt{k\log k}$-away from the minimax optimal rate.

\section{Conclusions and open problems}

Motivated by spatially distributed large-scale learning systems, we introduced a new cooperative setting for adversarial semi-bandits in which only some of the agents are active at any given time step.
We designed and analyzed an efficient algorithm that we called \coopftpl{} for which we proved near-optimal regret guarantees with state-of-the-art computational complexity costs.
Our analysis relies on the fact that agents are aware of their activation probabilities, and they have some prior knowledge about the connectivity of the graph.
Two interesting new lines of research are investigating if either of these assumptions could be lifted while retaining low regret and good computational complexity.
In particular, removing the need for prior knowledge of the independence number would represent a significant theoretical and practical improvement, given that computing $\alpha_1$ is NP-hard in the worst-case.
It is however unclear if existing techniques that address this problem in similar settings (e.g., \cite{cesa2019delay}) would yield any results in our general case.
We believe that entirely new ideas will be required to deal with this issue.
We leave these intriguing problems open for future work.

\section*{Acknowledgements}
This project has received funding from the French ``Investing for the Future – PIA3'' program under the Grant agreement ANITI ANR-19-PI3A-0004.\footnote{\url{https://aniti.univ-toulouse.fr/}}

\bibliography{biblio}

\appendix

\section{Legendre functions and \fen{} conjugates}
\label{s:convex-recap}

In this section, we briefly recall a few known definitions and facts in convex analysis.

\begin{definition}[Interior, boundary, and convex hull]
For any subset $E$ of $\R^k$, we denote its \emph{topological interior} by $\Int(E)$, its \emph{boundary} by $\partial E$, and its \emph{convex hull} by $\conv(E)$.
\end{definition}

\begin{definition}[Effective domain]
The \emph{effective domain} of a convex function $F\colon \bbR^k \to \Rbar$ is
\begin{equation}
    \label{e:domain}
    \dom(F)
:=
    \bcb{ x \in \bbR^k : F(x) < + \iop } \;.
\end{equation}
\end{definition}

With a slight abuse of notation, we will denote with the same symbol $f$ a convex function $f\colon \to \Rbar$ and its restriction $\widetilde{f} \colon \dom(f) \to \bbR$ to its effective domain.

\begin{definition}[\leg{} function]
A convex function $F\colon\bbR^k \to \Rbar$ is \emph{\leg} if
\begin{enumerate}[topsep = 0pt, parsep = 0pt, itemsep = 0pt]
\item $\Int\brb{ \dom(F) }$ is non-empty;
\item $F$ is differentiable and strictly convex on $\Int\brb{ \dom(F) }$;
\item for all $x_0 \in \partial \lsb{ \Int\brb{ \dom(F) } }$, if $x \in \Int \brb{ \dom(F) }, x \to x_0$, then $\lno{ \nabla F (x) }_2 \to +\infty$.
\end{enumerate}
\end{definition}

\begin{definition}[\fen{} conjugate]
Let $F\colon \bbR^k \to \Rbar$ be a convex function.
The \emph{\fen{} conjugate} $F^*$ of $F$ is defined as the function
\begin{align*}
    F^* \colon \bbR^k & \to \Rbar \\
    z & \mapsto F^*(z) := \sup_{x \in \bbR^k} \brb{ \lan{ x, z } - F(x) } \;.
\end{align*}
\end{definition}

\begin{definition}[\breg]
\label{d:breg}
Let $F \colon \bbR^k \to \Rbar$ a convex function with non-empty $\Int \brb{\dom(F)}$ that is differentiable on $\Int \brb{\dom(F)}$.
The \emph{\breg{}} induced by $F$ is
\begin{align*}
    \cB_ F \colon \bbR^k \times \Int\brb{\dom(F)} & \to \Rbar \\
    (x,y) & \mapsto \cB_F (x,y) :=  F(x) -  F(y) - \ban{ \nabla  F(y), x-y } \;.
\end{align*}
\end{definition}

The following results are taken from \cite[Theorem~26.6 and Corollary~26.8]{lattimore2020bandit}.

\begin{theorem}
\label{t:fenchel-stuff}
Let $F\colon \R^k \to \Rbar$ be a \leg{} function. Then:
\begin{enumerate}[topsep = 0pt, parsep = 0pt, itemsep = 0pt]
\item the \fen{} conjugate $F^{*}$ of $F$ is \leg{};
\item $\nabla F \colon \Int\brb{\dom(F)} \to \Int\brb{ \dom(F^{*}) }$ is bijective with inverse $(\nabla F)^{-1}=\nabla F^{*}$;
\item $\cB_F(x, y) = \cB_{F^{*}} \brb{\nabla F(y), \nabla F(x)}$, for all $x, y \in \Int\brb{ \dom(f) }$.
\end{enumerate}
\end{theorem}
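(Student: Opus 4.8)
The plan is to obtain all three items from one master fact of convex duality --- for a proper closed convex $F\colon\bbR^k\to\Rbar$ and arbitrary $x,z\in\bbR^k$, the conditions $z\in\partial F(x)$, $x\in\partial F^*(z)$, and $F(x)+F^*(z)=\lan{x,z}$ are equivalent (Fenchel--Young) --- together with the conjugacy, under Fenchel conjugation, of the two properties that define a Legendre function. I would prove the items in the order (1), (2), (3).

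For item (1), a Legendre function is by definition \emph{essentially smooth} and \emph{essentially strictly convex}, and the key duality pair states that $F$ is essentially smooth iff $F^*$ is essentially strictly convex, and $F$ is essentially strictly convex iff $F^*$ is essentially smooth. Hence $F^*$ inherits both properties and is therefore Legendre. Establishing the duality pair is the substantive step: one shows that essential smoothness of $F$ makes $\partial F$ single-valued on $\Int\brb{\dom(F)}$ and empty off it, and then a separation argument on the graph of the inverse relation $\partial F^*=(\partial F)^{-1}$ converts this single-valuedness into strict convexity of $F^*$ on the interior of its domain (and symmetrically).

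For item (2), injectivity of $\nabla F$ on $\Int\brb{\dom(F)}$ follows from strict convexity: if $\nabla F(x_1)=\nabla F(x_2)$ with $x_1\neq x_2$, adding the two strict subgradient inequalities gives $0>\lan{\nabla F(x_1)-\nabla F(x_2),\,x_2-x_1}=0$, a contradiction. For the range, if $z=\nabla F(x)$ then $x\in\partial F^*(z)\neq\emptyset$, and essential smoothness of $F^*$ (item (1)) forces $z\in\Int\brb{\dom(F^*)}$; conversely, any $z\in\Int\brb{\dom(F^*)}$ has $\partial F^*(z)\neq\emptyset$, any $x$ in it satisfies $z\in\partial F(x)$, and essential smoothness of $F$ forces $x\in\Int\brb{\dom(F)}$ with $z=\nabla F(x)$. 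So $\nabla F$ is a bijection onto $\Int\brb{\dom(F^*)}$, and $z=\nabla F(x)\iff x\in\partial F^*(z)=\{\nabla F^*(z)\}$ gives $(\nabla F)^{-1}=\nabla F^*$.

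Item (3) is then a one-line computation: putting $u=\nabla F(x)$ and $v=\nabla F(y)$, Fenchel--Young gives $F^*(u)=\lan{x,u}-F(x)$ and $F^*(v)=\lan{y,v}-F(y)$, while item (2) gives $\nabla F^*(u)=x$; substituting these into $\cB_{F^*}(v,u)=F^*(v)-F^*(u)-\lan{\nabla F^*(u),\,v-u}$ cancels all the inner-product terms except $F(x)-F(y)-\lan{\nabla F(y),\,x-y}$, which is exactly $\cB_F(x,y)$. The main obstacle is the conjugacy duality inside item (1): the equivalence between essential smoothness and essential strict convexity under Fenchel conjugation is the genuinely nontrivial ingredient, resting on separation-theorem arguments that control how subdifferentials behave near the boundary of the effective domain (the content of Rockafellar's treatment of Legendre functions). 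Once that is in hand, items (2) and (3) are routine.
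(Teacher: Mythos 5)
Your proposal is correct: the paper offers no proof of its own for this statement, citing it directly from \cite[Theorem~26.6 and Corollary~26.8]{lattimore2020bandit}, and your argument is precisely the standard one underlying that reference (Rockafellar's duality between essential smoothness and essential strict convexity under \fen{} conjugation, followed by the Fenchel--Young computation for the Bregman identity). The only point worth flagging is that the conjugacy duality in item (1) requires $F$ to be closed (lower semicontinuous), which you correctly assume but which the paper's definition of a \leg{} function omits.
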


\begin{corollary}
If $F\colon \bbR^k \to \Rbar$ is a \leg{} function and $x \in \argmin_{x \in \dom (F)} F(x)$, then $x \in \Int\brb{\dom(F)}$.
\end{corollary}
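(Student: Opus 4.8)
The plan is a proof by contradiction organized around the ``barrier'' property (item~3) in the definition of a \leg{} function. Write $\xs$ for the minimizer $x$ and suppose, for contradiction, that $\xs \notin \Int\brb{\dom(F)}$. I will construct a sequence of interior points converging to $\xs$ along which $\lno{\nabla F}_2$ stays bounded, contradicting item~3.

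First I would set up the geometry. Since $\Int\brb{\dom(F)}$ is non-empty (item~1), fix a point $y$ in it and a radius $r>0$ with $\overline{B}(y,r) \subset \Int\brb{\dom(F)}$. For $t \in (0,1)$ put $x_t := (1-t)\xs + t\,y$; because $\xs \in \dom(F)$ and $y$ is interior to the convex set $\dom(F)$, the standard fact that the half-open segment from a point of a convex set to a point of its interior lies in the interior gives $x_t \in \Int\brb{\dom(F)}$, so $F$ is differentiable at $x_t$ (item~2). Since $F$ is convex and $\xs$ minimizes $F$ over $\dom(F)$, the one-dimensional map $t \mapsto F(x_t)$ is convex and attains its minimum at $t=0$; hence it is non-decreasing on $[0,1]$ and, being differentiable at interior $t$, has non-negative derivative there, i.e.\ $\ban{\nabla F(x_t),\, y-\xs} \ge 0$ for $t\in(0,1)$.

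The crux is then a uniform bound on the gradient. For any $z \in \dom(F)$ the gradient (subgradient) inequality at $x_t$ gives $F(z) \ge F(x_t) + \ban{\nabla F(x_t),\, z-x_t} \ge F(\xs) + \ban{\nabla F(x_t),\, z-x_t}$, using $F(x_t)\ge F(\xs)$. Decomposing $z-x_t = (z-y) + (1-t)(y-\xs)$ and dropping the non-negative term $(1-t)\ban{\nabla F(x_t),\, y-\xs}$, I get $\ban{\nabla F(x_t),\, z-y} \le F(z)-F(\xs)$ for every $z \in \dom(F)$, in particular for $z \in \overline{B}(y,r)$. Taking the supremum over such $z$ turns the left side into $r\,\lno{\nabla F(x_t)}_2$ and bounds the right side by $\max_{\overline{B}(y,r)}F - F(\xs) < \infty$ (a convex function is continuous, hence bounded, on a compact subset of the interior of its domain). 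Thus $\lno{\nabla F(x_t)}_2 \le \tfrac1r\brb{\max_{\overline{B}(y,r)}F - F(\xs)}$ for all $t \in (0,1)$.

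Finally, since $\xs \in \dom(F) \subseteq \overline{\dom(F)} = \overline{\Int\brb{\dom(F)}}$ (the last equality because $\dom(F)$ is convex with non-empty interior) but $\xs \notin \Int\brb{\dom(F)}$, the point $\xs$ lies on $\partial\lsb{\Int\brb{\dom(F)}}$. As $t \to 0^+$ we have $x_t \to \xs$ with $x_t \in \Int\brb{\dom(F)}$, so item~3 forces $\lno{\nabla F(x_t)}_2 \to +\infty$, contradicting the bound above. Hence $\xs \in \Int\brb{\dom(F)}$. The argument is entirely ``soft,'' and I do not anticipate a genuine obstacle; the only points needing minor care are the two convexity facts invoked (the half-open segment from $\xs$ to $y$ lies in the interior, and the closure of the interior equals the closure of $\dom(F)$) and the observation that, to exploit item~3, one must bound the \emph{full} gradient norm by testing the subgradient inequality against a whole ball of interior points rather than against a single direction.
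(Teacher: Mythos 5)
Your proof is correct. Note that the paper itself does not prove this corollary: it is imported verbatim from \cite[Corollary~26.8]{lattimore2020bandit}, so there is no in-paper argument to compare against. Your argument is a clean, self-contained version of the standard one: it amounts to showing that a \leg{} (essentially smooth) function has empty subdifferential at boundary points of $\dom(F)$, so a minimizer, which must satisfy the first-order condition, cannot sit there. All the steps check out: the line-segment principle puts $x_t\in\Int\brb{\dom(F)}$ for $t\in(0,1]$; the one-dimensional convexity argument gives $\ban{\nabla F(x_t),\,y-\xs}\ge 0$; the decomposition $z-x_t=(z-y)+(1-t)(y-\xs)$ in the gradient inequality, tested against the whole ball $\overline{B}(y,r)$, yields the uniform bound $\lno{\nabla F(x_t)}_2\le \tfrac1r\brb{\max_{\overline{B}(y,r)}F-F(\xs)}$, where the maximum is finite because a convex function is continuous on compact subsets of the interior of its domain; and the identification $\xs\in\partial\bsb{\Int\brb{\dom(F)}}$ via $\overline{\dom(F)}=\overline{\Int\brb{\dom(F)}}$ is exactly what is needed to invoke the blow-up condition in item~3 and reach the contradiction. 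Your closing remark is also the right emphasis: the only non-obvious move is to test the subgradient inequality against a full ball of interior points so as to control the entire gradient norm rather than a single directional derivative.
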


\section{Online Stochastic Mirror Descent (OSMD)}
\label{s:osmd}

In this section, we briefly recall the standard Online Stochastic Mirror Descent algorithm (OSMD) (Algorithm~\ref{algo:OSMD}) and its analysis.

For an overview on some basic convex analysis definitions and results, we refer the reviewer to the previous Appendix~\ref{s:convex-recap}.
For a convex function $F\colon \bbR^k \to \Rbar$ that is differentiable on the non-empty interior $\Int\brb{\dom(F)}\neq \varnothing$ of its effective domain $\dom(F)$, we denote by $\cB_F$ the \breg{} induced by $F$ (Definition~\ref{d:breg}).
Following the existing convention, we refer to the input function $F$ of OSMD as a \emph{potential}.
Furthermore, given a measure $P$ on a subset of $\R^k$, we say that a vector $x \in \R^k$ is the \emph{mean of the measure} $P$ if $x$ is the component-wise expectation of a $\R^k$-valued random variable with distribution $P$.
For any time step $t \in \{1,2,\ld\}$, we denote by $\bbE_t$ the expectation conditioned to the history up to and including round $t-1$.

\begin{algorithm2e}
    \LinesNumbered
    \SetAlgoNoLine
    \SetAlgoNoEnd
    \DontPrintSemicolon
    \SetKwInput{kwInit}{Initialization}
    \KwIn{\leg{} potential $F\colon \bbR^k \to \Rbar$, compact action set $\calA \s \bbR^k$ with $\Int\brb{ \dom(F) } \cap \conv(\calA) \neq \varnothing$, learning rate $\eta>0$}
    \kwInit{$\xbar_{1}=\argmin_{x \in \dom(F)\cap \conv(\calA)} F(x)$} 
\For{$t=1,2,\ld$}
    {
        choose a measure $P_{t}$ on $\calA$ with mean $\xbar_{t}$\;
        make a prediction $x_{t}$ drawn from $\calA$ according to $P_{t}$ and suffer the loss $\lan{ x_{t}, \ell_{t} }$\;
        compute an estimate $\lhat_{t}$ of the loss vector $\ell_{t}$\label{st:estim}\;
        make the update: $
            \xbar_{t+1}
        =
            \argmin_{x \in \dom(F)\cap \conv(\calA)} \eta \ban{ x, \lhat_{t} }+\cB_{F} (x, \xbar_{t})$\;
    }

\caption{\label{algo:OSMD}Online Stochastic Mirror Descent (OSMD)}
\end{algorithm2e}

It is known that since $\conv(\calA)$ is convex and compact, $\Int\brb{ \dom(F) } \cap \conv(\calA) \neq \varnothing$, and $F$ is \leg{}, then, all the $\argmin$'s exist in Algorithm~\ref{algo:OSMD} and $\xbar_t \in \Int\brb{\dom(F)} \cap \conv(\calA)$ for all $t\in \{1,2,\ld\}$ (see, e.g., \cite[Exercise~28.2]{lattimore2020bandit}).

The following result is taken from \cite[Theorem 28.10]{lattimore2020bandit} and gives an upper bound on the regret of OSMD.

\begin{theorem}\label{thm:regretOSMD}
Suppose that OSMD (Algorithm~\ref{algo:OSMD}) is run with input $F,\calA,\eta$.
If the estimates $\lhat_t$ computed at line~\ref{st:estim} satisfy $\bbE_t \bsb{ \lhat_t  } = \ell_t$ for all $t\in\{1,2,\ld\}$, then, for all $x\in \conv(\calA)$,
\[
    \bbE \Bbsb{\sum_{t=1}^T  \lan{ \xbar_t, \ell_t }  - \sum_{t=1}^T \lan{ x, \ell_t }}
\le 
    \bbE\lsb{
        \frac{F(x)-F(\xbar_{1})}{\eta}
        + \sum_{t=1}^{T}\ban{ \xbar_{t} - \xbar_{t+1}, \lhat_{t} }
        - \frac{1}{\eta} \sum_{t=1}^{T} \cB_F (\xbar_{t+1}, \xbar_{t}) 
    }\;.
\]
Furthermore, letting
\[
    \xt_{t+1}
=
    \argmin_{x \in \dom(F)} \eta \ban{ x, \widehat{\ell}_{t} }+\cB_{F} (x, \xbar_{t} )
\] 
and assuming that $-\eta \lhat_{t} + \nabla F(x) \in \nabla F\brb{\dom(F)}$ for all $x \in \conv(\calA)$ almost surely, then
\[
   \sup_{x \in \conv(\calA)} \bbE \Bbsb{\sum_{t=1}^T  \lan{ \xbar_t, \ell_t }  - \sum_{t=1}^T \lan{ x, \ell_t }}
\le 
    \frac{\diam_{F}\brb{\coa }}{\eta}
    +\frac{1}{\eta} \sum_{t=1}^{T} \bbE \bsb{ \cB_F (\xbar_{t}, \xt_{t+1}) } \;,
\]
where 
$
    \diam_F\brb{\coa}
:= 
    \sup_{x,y \in \coa} \brb{ F(x) - F(y) }
$ 
is the \emph{diameter} of $\coa$ with respect to $F$.
\end{theorem}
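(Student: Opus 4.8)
The plan is to run the classical online mirror descent analysis in two stages: first establish the ``raw'' per-step regret bound via a telescoping Bregman argument, then convert it into the diameter-plus-stability form by relating the constrained update to its unconstrained counterpart $\xt_{t+1}$. First I would exploit the unbiasedness hypothesis $\bbE_t[\lhat_t]=\ell_t$: since $\xbar_t$ is determined by the history up to round $t-1$ and $x$ is fixed, the tower rule gives $\bbE[\ban{\xbar_t-x,\ell_t}]=\bbE[\ban{\xbar_t-x,\lhat_t}]$, so it suffices to control the pathwise quantity $\sum_t\ban{\xbar_t-x,\lhat_t}$. Then I would invoke the first-order optimality condition for the constrained update: because the Legendre property forces $\xbar_{t+1}\in\Int(\dom(F))$, the gradient exists there and the variational inequality reads $\ban{\eta\lhat_t+\nabla F(\xbar_{t+1})-\nabla F(\xbar_t),\,x-\xbar_{t+1}}\ge 0$ for every $x\in\coa$, i.e. $\eta\ban{\lhat_t,\xbar_{t+1}-x}\le\ban{\nabla F(\xbar_{t+1})-\nabla F(\xbar_t),\,x-\xbar_{t+1}}$.

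Combining this with the three-point identity $\ban{\nabla F(\xbar_{t+1})-\nabla F(\xbar_t),x-\xbar_{t+1}}=\cB_F(x,\xbar_t)-\cB_F(x,\xbar_{t+1})-\cB_F(\xbar_{t+1},\xbar_t)$, after splitting $\ban{\xbar_t-x,\lhat_t}=\ban{\xbar_t-\xbar_{t+1},\lhat_t}+\ban{\xbar_{t+1}-x,\lhat_t}$ and summing over $t$, the $\cB_F(x,\xbar_t)$ terms telescope. The leftover $\cB_F(x,\xbar_1)-\cB_F(x,\xbar_{T+1})$ is handled by dropping the nonnegative last term and using that $\xbar_1$ minimizes $F$ over $\dom(F)\cap\coa$, so $\ban{\nabla F(\xbar_1),x-\xbar_1}\ge 0$ and hence $\cB_F(x,\xbar_1)\le F(x)-F(\xbar_1)$. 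Dividing by $\eta$ and taking expectations yields exactly the first displayed inequality.

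For the refined bound I would introduce the unconstrained update $\xt_{t+1}$, whose first-order condition gives the crucial identity $\nabla F(\xt_{t+1})=\nabla F(\xbar_t)-\eta\lhat_t$; the hypothesis that $-\eta\lhat_t+\nabla F(x)$ lies in $\nabla F(\dom(F))$ is precisely what guarantees $\xt_{t+1}$ is well defined (via $\xt_{t+1}=\nabla F^{*}(\nabla F(\xbar_t)-\eta\lhat_t)$, Theorem~\ref{t:fenchel-stuff}). Substituting $\eta\lhat_t=\nabla F(\xbar_t)-\nabla F(\xt_{t+1})$ into the per-step term and expanding the Bregman divergences, I expect the exact identity
\[
    \ban{\xbar_t-\xbar_{t+1},\eta\lhat_t}-\cB_F(\xbar_{t+1},\xbar_t)
=
    \cB_F(\xbar_t,\xt_{t+1})-\cB_F(\xbar_{t+1},\xt_{t+1})
\]
to emerge, so that nonnegativity of $\cB_F(\xbar_{t+1},\xt_{t+1})$ bounds the left side by $\cB_F(\xbar_t,\xt_{t+1})$. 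Finally, taking the supremum over $x\in\coa$ in the first inequality sends $F(x)-F(\xbar_1)\le\diam_F(\coa)$ (again because $\xbar_1$ minimizes $F$), and summing the per-step bound over $t$ delivers the stability sum $\tfrac{1}{\eta}\sum_t\bbE[\cB_F(\xbar_t,\xt_{t+1})]$, completing the argument.

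The Bregman algebra (the three-point identity and the expansion above) is routine. The delicate points are all structural, and I expect the main obstacle to be justifying that every gradient appearing in the analysis is well defined and that the variational inequalities are legitimate, namely that $\xbar_t,\xbar_{t+1}\in\Int(\dom(F))$ and $\xt_{t+1}\in\dom(F)$. These rest on the Legendre hypotheses and the bijectivity of $\nabla F$ (Theorem~\ref{t:fenchel-stuff}), together with the stated range condition $-\eta\lhat_t+\nabla F(x)\in\nabla F(\dom(F))$, and they must be verified before any identity is meaningful; once they are in place, the rest is bookkeeping.
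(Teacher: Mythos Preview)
Your proposal is correct and follows the standard mirror-descent analysis. Note, however, that the paper does not actually prove this theorem: it simply cites it as \cite[Theorem~28.10]{lattimore2020bandit} and restates it for convenience. Your argument (first-order optimality plus the three-point identity to telescope, then the unconstrained-update identity $\nabla F(\xt_{t+1})=\nabla F(\xbar_t)-\eta\lhat_t$ and the generalized Pythagorean relation $\ban{\xbar_t-\xbar_{t+1},\eta\lhat_t}-\cB_F(\xbar_{t+1},\xbar_t)=\cB_F(\xbar_t,\xt_{t+1})-\cB_F(\xbar_{t+1},\xt_{t+1})$) is precisely the route taken in that reference, so there is nothing to compare.
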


\section{Proofs of lemmas on geometric distributions}
\label{s:geometric-appe}

In this section we provide all missing proofs on geometric distributions that we stated in Section~\ref{s:upper-bound}.

\begin{proof}\textbf{of Lemma~\ref{lem:Min-Geoms}}
For all $j\in \{1,\ld,m\}$, the cumulative distribution function (c.d.f.) of $Y_{j}$ is  given, for all $n\in \bbN$, by
\[
    \mathbb{P}\left[Y_{j}\leq n\right]
=
    p_{j}\sum_{i=1}^{n}\left(1-p_{j}\right)^{i-1}=1-\left(1-p_{j}\right)^{n} \;.
\]
The c.d.f. of $Z$ is given, for all $n\in \bbN$, by
\begin{align*}
    \mathbb{P}\left[Z\leq n\right] 
& =
    \mathbb{P}\left[\min_{j\in \{1,\ld,m\}} Y_j\leq n\right]=1-\prod_{j=1}^m\mathbb{P}\left[Y_{j}>n\right]
=
    1-\prod_{j=1}^m\left(1-\mathbb{P}\left[Y_{j}\leq n\right]\right)\\
 & =
    1-\prod_{j=1}^m\left(1-\left(1-\left(1-p_{j}\right)^{n}\right)\right)
 =
    1-\left(\prod_{j=1}^m\left(1-p_{j}\right)\right)^{n}\\
 & =
    1-\left(1-\left[1-\prod_{j=1}^m\left(1-p_{j}\right)\right]\right)^{n} \;,
\end{align*}
and this is the c.d.f. of a geometric random variable with parameter $(1-\prod_{j=1}^m\left(1-p_{j}\right)$ \;.
\end{proof}

\begin{proof}\textbf{of Lemma~\ref{lem:Min-Geom-Beta}}
By elementary calculations,
\begin{align*}
\mathbb{E}\left[\min\left\{ G,\beta\right\} \right] & =\sum_{n=1}^{\infty}n\left(1-q\right)^{n-1}q-\sum_{n=\beta}^{\infty}\left(n-\beta\right)\left(1-q\right)^{n-1}q\\
 & =\sum_{n=1}^{\infty}n\left(1-q\right)^{n-1}q-\left(1-q\right)^{\beta}\sum_{n=\beta}^{\infty}\left(n-\beta\right)\left(1-q\right)^{n-\beta-1}q\\
 & =\left(1-\left(1-q\right)^{\beta}\right)\sum_{n=1}^{\infty}n\left(1-q\right)^{n-1}q=\frac{\left(1-\left(1-q\right)^{\beta}\right)}{q} \;.
\end{align*}
\end{proof}

\begin{proof}\textbf{of Lemma~\ref{lem:buildGeoms}}
The proof follows immediately from the fact that $\bcb{ X_s (v) \, Y_s(v) }_{ s \in \cI, v \in \cV }$ is a collection of independent Bernoulli random variables with expectation $\bbE \bsb{ X_s (v) \, Y_s(v) } = p_1(v) \, p_2(v)$ for any $s\in \bbN$ and all $v\in \cV$.
\end{proof}

\section{Proof of Theorem~\ref{thm:main}}\label{app:proof}

\label{s:proof-main-thm}

In this section, we present a complete proof of Theorem~\ref{thm:main}.

\begin{proof}\textbf{of Theorem~\ref{thm:main}}
For the sake of convenience, we define the expected individual regret of an agent $v\in \cV$ in the network with respect to a fixed action $a \in \calA$ by
\[
    R_T(a,v)
:=
    \mathbb{E}\left[\sum_{t=1}^{T}\left\langle x_{t}(v),\ell_{t}\right\rangle-\sum_{t=1}^{T}\left\langle a,\ell_{t}\right\rangle \right]\;,
\]
where the expectation is taken with respect to the internal randomization of the agent, but not its activation probability $q(v)$. With this definition the total regret on the network in Eq.~\eqref{eq:totalExpRegret} can be decomposed as 
\begin{align}
    R_T \nonumber
&=
    \max_{a \in \calA}\bbE\lsb{\sum_{t=1}^{T}\sum_{v\in \cA_t}\Brb{ \ban{ x_{t}(v),\ell_{t} } -\left\langle a,\ell_{t}\right\rangle}}
=
    \max_{a \in \calA}\bbE\lsb{\sum_{t=1}^{T}\bbE_t\lsb{\sum_{v\in \cA_t}\Brb{ \ban{ x_{t}(v),\ell_{t} } -\left\langle a,\ell_{t}\right\rangle}}}\\
&=    
    \max_{a \in \calA}\bbE\lsb{\sum_{t=1}^{T}\sum_{v\in\cV} q(v) \bbE_t\Bsb{\ban{ x_{t}(v),\ell_{t} }-\left\langle a,\ell_{t}\right\rangle}}
=    
    \max_{a \in \calA}\sum_{v\in\cV} q(v) R_T(a,v)\;.\label{eq:decompositionRegret}
\end{align}
The proof then proceeds by isolating the bias in the loss estimators. 
For each $a\in \calA$ we get
\begin{multline*}
R_{T}(a,v) 
\\
\begin{aligned}
& =
    \mathbb{E}\left[\sum_{t=1}^{T}\left\langle x_{t}(v)-a,\ell_{t}\right\rangle \right]=\mathbb{E}\left[\mathbb{E}_{t}\left[\sum_{t=1}^{T}\left\langle x_{t}(v)-a,\ell_{t}\right\rangle \right]\right]=\mathbb{E}\left[\sum_{t=1}^{T}\left\langle \xbar_t(v)-a,\ell_{t}\right\rangle \right]\\
 & =
    \mathbb{E}\left[\sum_{t=1}^{T}\left\langle \xbar_t(v)-a,\widehat{\ell}_{t}(v)\right\rangle \right]+\mathbb{E}\left[\sum_{t=1}^{T}\left\langle \xbar_t(v)-a,\ell_{t}-\widehat{\ell}_{t}(v)\right\rangle \right]\\
 & \leq
    \underbrace{\frac{F(\xbar_{1}(v))-F(a)}{\eta}}_{(\mathrm{I})}
    + \underbrace{\mathbb{E}\left[\frac{1}{\eta}\sum_{t=1}^{T}\cB_{F}\left(\xbar_t(v),\xbar_{t+1}(v)\right)\right]}_{(\mathrm{II})}
    +\underbrace{\mathbb{E}\left[\sum_{t=1}^{T}\left\langle \xbar_t(v)-a,\ell_{t}-\widehat{\ell}_{t}(v)\right\rangle \right]}_{(\mathrm{III})}
\end{aligned}
\end{multline*}
where the inequality follows by the standard analysis of OMD. %
We bound the three terms separately. For the first term $(\mathrm{I})$, we have
\begin{align}
F(a) 
& =
    \sup_{x\in\mathbb{R}^{k}} \brb{ \left\langle a,x\right\rangle -F^{*}(x) }
=
    \sup_{x\in\mathbb{R}^{k}}\left(\left\langle a,x\right\rangle -\mathbb{E}\left[\max_{a'\in\mathcal{A}}\left\langle a',x+Z\right\rangle \right]\right)\nonumber \\
 & \geq
    -\mathbb{E}\left[\max_{a'\in\mathcal{A}}\left\langle a',Z\right\rangle \right]\geq-m\mathbb{E}\left[\left\Vert Z\right\Vert _{\infty}\right]=-m\sum_{i=1}^{k}\frac{1}{i}\geq-m\left(1+\log(k)\right),\label{eq:combinatorialDiam}
\end{align}
where the first inequality follows by choosing $x=0$, the second
follows from H\"older's inequality and $\left\Vert a\right\Vert _{1}\leq m$
for any $a\in\mathcal{A}$, and the last equality is Exercise 30.4
in \cite{lattimore2020bandit}. It follows that
\[
F \brb{ \xbar_{1}(v) } -F(a)\leq m \brb{ 1+\log(k) }\;,
\]
where we use the fact that $F(a)\leq0$ for all $a\in\mathcal{A}$
and this follows from the first line of Eq. (\ref{eq:combinatorialDiam})
by the convexity of the maximum of a finite number of linear functions, using Jensen's inequality and the
fact that the random variable $Z$ is centered.
Thus
\[
    (\mathrm{I})
\le
    \frac{m (1+\log k)}{\eta} \;.
\]
We now study the second term $(\mathrm{II}).$
We have 

\begin{align}
    \cB_{F}\left(\xbar_{t}(v),\xbar_{t+1}(v)\right) 
& =
    \cB_{F^{*}}\left(\nabla F\left(\xbar_{t+1}(v)\right),\nabla F\left(\xbar_t(v)\right)\right)\nonumber \\
 & =
    \cB_{F^{*}}\left(-\eta\widehat{L}_{t-1}(v)-\eta\widehat{\ell}_{t}(v),-\eta\widehat{L}_{t-1}(v)\right)\nonumber \\
 & =
    \frac{\eta^{2}}{2}\left\Vert \widehat{\ell}_{t}(v)\right\Vert _{\nabla^{2}F^{*}(\xi(v))}^{2}\;,\label{eq:divereCOmb}
\end{align}
where the first equality is a standard property of Bregmann divergence,
the second follows from the definitions of the updates and the last
by Taylor's theorem, where 
$\xi(v)=-\eta \widehat{L}_{t-1}(v)-\alpha \eta \widehat{\ell}_{t}(v)$, for some $\alpha \in [0,1]$.
To calculate the Hessian we use a change of variable to avoid applying
the gradient to the (possibly) non-differentiable $\argmax$ and we get:%
\[
\begin{aligned}
\nabla^{2} F^{*}(x) 
&=
    \nabla\left(\nabla F^{*}(x)\right)=\nabla \mathbb{E}[h(x+Z)]=\nabla \int_{\mathbb{R}^{k}} h(x+z) \zeta(z) d z \\
&=
    \nabla \int_{\mathbb{R}^{k}} h(u) \zeta(u-x) d u=\int_{\mathbb{R}^{k}} h(u)(\nabla \zeta(u-x))^{\top} d u \\
&=
    \int_{\mathbb{R}^{k}} h(u) \sign(u-x)^{\top} \zeta(u-x) d u=\int_{\mathbb{R}^{k}} h(x+z) \sign(z)^{\top} \zeta(z) d z
\end{aligned}
\]
Using the definition of $\xi(v)$ and the fact that $h(x)$ is nonnegative,
\[
\begin{aligned}
    \nabla^{2} F^{*}(\xi(v))_{i j} 
&=
    \int_{\mathbb{R}^{k}} h(\xi(v)+z)_{i} \sign(z)_{j} \zeta(z) d z \\
& \leq 
    \int_{\mathbb{R}^{k}} h(\xi(v)+z)_{i} \zeta(z) d z \\
&=
    \int_{\mathbb{R}^{k}} h\left(z-\eta \widehat{L}_{t-1}(v)-\alpha \eta \widehat{\ell}_{t}(v)\right)_{i} \zeta(z) d z \\
&=
    \int_{\mathbb{R}^{k}} h\left(u-\eta \widehat{L}_{t-1}(v)\right)_{i} \zeta\left(u+\alpha \eta \widehat{\ell}_{t}(v)\right) d u \\
&  \leq
    \exp \left(\left\|\alpha \eta \widehat{\ell}_{t}(v)\right\|_{1}\right) \int_{\mathbb{R}^{k}} h\left(u-\eta \widehat{L}_{t-1}(v)\right)_{i} \zeta(u) d u \\
& \leq
    \exp \lrb{ \alpha \eta \sum_{i=1}^k B_t(i,v) \beta  } \xbar_{t}(i,v)   \\
& \leq
    \exp \lrb{ \alpha \eta k \beta  } \xbar_{t}(i,v)   \\
& \leq
    e \, \xbar_{t}(i,v)
\end{aligned}
\]
where the last inequality follows by $\alpha \le 1$ and $\beta \le 1 /(\eta k)$. 
Plugging this estimate in Eq. (\ref{eq:divereCOmb}) yields
\begin{align*}
\frac{\eta^{2}}{2}\left\Vert \widehat{\ell}_{t}(v)\right\Vert _{\nabla^{2}F^{*}(\xi(v))}^{2} 
& =
    \frac{\eta^{2}}{2}\sum_{i=1}^{k}\sum_{j=1}^{k}\nabla^{2}F^{*}\brb{ \xi(v) }_{i,j}\widehat{\ell}_{t}(i,v)\widehat{\ell}_{t}(j,v)\\
 & \leq
    \frac{\eta^{2} e}{2}\sum_{i=1}^{k}\sum_{j=1}^{k}\xbar_{t}(i,v)\widehat{\ell}_{t}(i,v)\widehat{\ell}_{t}(j,v)\\
 & \leq
    \frac{\eta^{2}e}{2}\sum_{i=1}^{k}\sum_{j=1}^{k}\xbar_{t}(i,v)B_{t}(i,v)\min_{w\in\cN(v)}\left\{ G_{t}(i,w)\right\} B_{t}(j,v)\min_{w\in\cN(v)}\left\{ G_{t}(j,w)\right\} ,
\end{align*}
where the last inequality follows by neglecting the truncation with
$\beta$. Hence multiplying $(\mathrm{II})$ by $q(v)$ and summing over $v\in \cV$ yields
\begin{align*}
& 
    \sum_{v\in\mathcal{V}}q(v)\mathbb{E}\left[\frac{\eta}{2}\sum_{t=1}^{T}\left\Vert \widehat{\ell}_{t}(v)\right\Vert _{\nabla^{2}F^{*}(\xi(v))}^{2}\right]
=
    \sum_{v\in \cV}q(v)\frac{\eta }{2}\mathbb{E}\left[\sum_{t=1}^{T}\mathbb{E}_{t}\left[\left\Vert \widehat{\ell}_{t}(v)\right\Vert _{\nabla^{2}F^{*}(\xi(v))}^{2}\right]\right]
\\
& \hspace{2.94753pt} \leq
    \sum_{v\in\mathcal{V}}q(v)\frac{\eta e}{2}\mathbb{E}\left[\sum_{t=1}^{T}\mathbb{E}_{t}\left[\sum_{i,j=1}^{k}\xbar_{t}(i,v)B_{t}(i,v)\min_{w\in\cN(v)}\left\{ G_{t}(i,w)\right\} B_{t}(j,v)\min_{w\in\cN(v)}\left\{ G_{t}(j,w)\right\} \right]\right]\;,
\end{align*}
which is rewritten as
\begin{multline*}
\sum_{v\in\mathcal{V}}q(v)\frac{\eta e}{2}\mathbb{E}
\lsb{
\sum_{t=1}^{T}\mathbb{E}_{t}\left[\sum_{i,j=1}^{k}\xbar_{t}(i,v)B_{t}(i,v)\min_{w\in\cN(v)}\left\{ G_{t}(i,w)\right\} B_{t}(j,v)\min_{w\in\cN(v)}\left\{ G_{t}(j,w)\right\} \right]
}
\\
\begin{aligned}
&=
    \sum_{v\in\mathcal{V}}q(v)\frac{\eta e}{2}\mathbb{E}\left[\sum_{t=1}^{T}\mathbb{E}_{t}\left[\sum_{i=1}^{k}\sum_{j=1}^{k}\xbar_{t}(i,v)B_{t}(i,v)\tilde{G_{t}}(i,v)B_{t}(j,v)\tilde{G}_{t}(j,v)\right]\right]\\
&=
    \sum_{v\in\mathcal{V}}q(v)\frac{\eta e}{2}\mathbb{E}\left[\sum_{t=1}^{T}\sum_{i=1}^{k}\sum_{j=1}^{k}\xbar_{t}(i,v)\mathbb{E}_{t}\left[B_{t}(i,v)B_{t}(j,v)\right]\mathbb{E}_{t}\left[\tilde{G_{t}}(i,v)\right]\mathbb{E}_{t}\left[\tilde{G}_{t}(j,v)\right]\right]
=:
    \left(\star\right)\;,
\end{aligned}
\end{multline*}
where in the first equality we defined $\tilde{G}_{t}(i,v) = \min_{w\in \cN(v)} \bcb{ G_t(i,w) }$ and, analogously, $\tilde{G}_{t}(j,v) = \min_{w\in \cN(v)} \bcb{ G_t(j,w) }$, while the second follows by the conditional independence of the three terms $\brb{ B_{t}(i,v),B_{t}(j,v) }$, $\tilde{G_{t}}(i,v)$, and $\tilde{G}_{t}(j,v)$ given the history up to time $t-1$.
Furthermore, making use of Lemmas \ref{lem:Min-Geoms}--\ref{lem:buildGeoms}, we get  
\begin{align*}
    \left(\star\right) 
& =
    \sum_{v\in\mathcal{V}}q(v)\frac{\eta e}{2}\mathbb{E}\left[\sum_{t=1}^{T}\mathbb{E}_{t}\left[\sum_{i=1}^{k}\sum_{j=1}^{k}\frac{\xbar_{t}(i,v)}{\overline{B}_{t}(i,v)}B_{t}(i,v)\frac{B_{t}(j,v)}{\overline{B}_{t}(j,v)}\right]\right]\\
 & \leq
    \sum_{v\in\mathcal{V}}q(v)\frac{\eta e}{2}\mathbb{E}\left[\sum_{t=1}^{T}\mathbb{E}_{t}\left[\sum_{i=1}^{k}\sum_{j=1}^{k}\frac{\xbar_{t}(i,v)}{\overline{B}_{t}(i,v)}\frac{B_{t}(j,v)}{\overline{B}_{t}(j,v)}\right]\right]\\
 & =
    \sum_{v\in\mathcal{V}}q(v)\frac{\eta e}{2}\mathbb{E}\left[\sum_{t=1}^{T}\sum_{i=1}^{k}\sum_{j=1}^{k}\frac{\xbar_{t}(i,v)}{\overline{B}_{t}(i,v)}\frac{\cancel{\overline{B}_{t}(j,v)}}{\cancel{\overline{B}_{t}(j,v)}}\right]\\
 & =
    \frac{\eta ek}{2}\mathbb{E}\left[\sum_{t=1}^{T}\sum_{i=1}^{k}\sum_{v\in\mathcal{V}}\frac{\xbar_{t}(i,v)q(v)}{\overline{B}_{t}(i,v)}\right]\\
 & \leq
    \frac{\eta ek}{2}\mathbb{E}\left[\sum_{t=1}^{T}\sum_{i=1}^{k}\left(\frac{1}{1-e^{-1}}\left(\alpha_1+\sum_{v\in\mathcal{V}}\xbar_{t}(i,v)q(v)\right)\right)\right]\\
 & =
    \frac{\eta ek}{2}\mathbb{E}\left[\sum_{t=1}^{T}\left(\frac{1}{1-e^{-1}}\left(k\alpha_1+\sum_{v\in\mathcal{V}}\sum_{i=1}^{k}\xbar_{t}(i,v)q(v)\right)\right)\right]\\
 & \leq
    \frac{\eta e kT}{2\left(1-e^{-1}\right)}\left(k\alpha_1+m Q\right)\;,
\end{align*}
where the first equality uses the expected value of the geometric
random variables $\tilde{G}$%
, the first inequality is obtained neglecting the indicator function
$B_{t}(i,v)$, the following equality uses the expected value of the
geometric random variables $B_{t}$,%
{} the second inequality follows by Lemma~\ref{l:alpha-q-bound}.
We now consider the last term $(\mathrm{III})$.
Since $\ell_{t}\geq \bbE[ \widehat{\ell}_{t}(v)]$, from Lemma \ref{lem:The-expectation-of},
we have
\begin{align*}
    (\mathrm{III})
& =
    \mathbb{E}\left[\sum_{t=1}^{T}\mathbb{E}_{t}\left[\left\langle \xbar_t(v)-a,\ell_{t}-\widehat{\ell}_{t}(v)\right\rangle \right]\right]\leq\mathbb{E}\left[\sum_{t=1}^{T}\mathbb{E}_{t}\left[\left\langle \xbar_t(v),\ell_{t}-\widehat{\ell}_{t}(v)\right\rangle \right]\right]\\
 & =
    \mathbb{E}\left[\sum_{t=1}^{T}\sum_{i=1}^{k}\ell_{t}(i)\xbar_{t}(i,v)\left(\prod_{w\in\cN(v)}\left(1-\xbar_{t}(i,w)\,q(w)\right)\right)^{\beta}\right]\;.
\end{align*}
Multiplying $(\mathrm{III})$ by $q(v)$ and summing over the agents, we can now upper bound $\ell_t(i)$ with $1$, then we use facts that $1-x\leq e^{-x}$ for $x\in[0,1]$ and that $e^{-y}\le 1/y$ for all $y>0$, to obtain
\begin{multline*}
    \sum_{v\in\cV}q(v)\mathbb{E}\left[\sum_{t=1}^{T}\sum_{i=1}^{k}\ell_{t}(i)\xbar_{t}(i,v)\left(\prod_{w\in\cN(v)}\left(1-\xbar_{t}(i,w)\,q(w)\right)\right)^{\beta}\right]
\\
\begin{aligned}
& \leq
    \mathbb{E}\left[\sum_{t=1}^{T}\sum_{i=1}^{k} \sum_{v\in\cV} \xbar_{t}(i,v) \, q(v) \left(\prod_{w\in\cN(v)}\left(1-\xbar_{t}(i,w)\,q(w)\right)\right)^{\beta}\right]
\\
& =
    \mathbb{E}\left[\sum_{t=1}^{T}\sum_{i=1}^{k} \sum_{\substack{v\in \cV \\ \xbar_{t}(i,v) \, q(v) > 0}} \xbar_{t}(i,v) \, q(v) \left(\prod_{w\in\cN(v)}\left(1-\xbar_{t}(i,w)\,q(w)\right)\right)^{\beta}\right]
\\
& \leq
    \mathbb{E}\left[\sum_{t=1}^{T}\sum_{i=1}^{k} \sum_{\substack{v\in \cV \\ \xbar_{t}(i,v) \, q(v) > 0}} \xbar_{t}(i,v) q(v) \exp\left(-\beta\sum_{w\in\cN(v)}\xbar_{t}(i,w)\,q(w)\right)\right]
\\
& \leq
    \mathbb{E}\left[\sum_{t=1}^{T}\sum_{i=1}^{k} \sum_{\substack{v\in \cV \\ \xbar_{t}(i,v) \, q(v) > 0}}
    \frac{\xbar_{t}(i,v) \, q(v)}{\beta\sum_{w\in\cN(v)}\xbar_{t}(i,w)\,q(w)}\right]
    \\
&\leq
    \mathbb{E}\left[\sum_{t=1}^{T}\sum_{i=1}^{k} \frac{\alpha_1}{\beta}\right]
=
    \frac{\alpha_1 \, k \, T}{\beta}
\end{aligned}
\end{multline*}
where in the last inequality follows by Lemma~\ref{lm:bound-alpha}.
Putting all together and recalling that $\beta = \left\lfloor \frac{1}{k \eta} \right\rfloor \ge \frac{1}{2 k \eta},$ we can conclude that for every $a\in\calA,$ thanks to Eq.~\eqref{eq:decompositionRegret}, we have
\begin{align*}
    R_T
& \leq
    Q\frac{m\left(1+\log(k)\right)}{\eta}+Q\frac{\eta ekT}{2\left(1-e^{-1}\right)}\left(\frac{k}{Q}\alpha_1+m\right)+\frac{\alpha_1 \, k \, T}{\beta}\\
 & \leq
    Q\frac{m\left(1+\log(k)\right)}{\eta}+Q\frac{\eta ekT}{2\left(1-e^{-1}\right)}\left(\frac{k}{Q}\alpha_1+m\right)+2 \eta \alpha_1 k^2 T\\
 & =
    Q\frac{m\left(1+\log(k)\right)}{\eta}+\eta Q k T\left(\frac{e}{2\left(1-e^{-1}\right)}\left(\frac{k}{Q}\alpha_1+m\right)+2 \alpha_1 \frac{k}{Q}\right)\\
& \le
    Q\frac{m\left(1+\log(k)\right)}{\eta}+ 5 \eta Q k T \left(\frac{k}{Q}\alpha_1+m\right)\\
& \le
    2Q\sqrt{ 15 mkT\log(k)\left(\frac{k}{Q}\alpha_1+m\right)} \;.
\end{align*}
\end{proof}

\section{Bounds on independence numbers}
\label{s:bounds-on-alpha}

The two following lemmas provide upper bounds of sums of weights over nodes of a graph expressed in terms of its independence number.

\begin{lemma}
\label{lm:bound-alpha}
Let $\cG=(\cV,\cE)$ be an undirected graph with indedence number $\alpha_1$, $q(v) \ge 0$, and $Q(v) = \sum_{w\in \cN(v)}q(w) > 0$ for all $v\in \cV$. 
Then
\[
	\sum_{v\in \cV} \frac{q(v)}{Q(v)}
\le
	\alpha_1
\]
\end{lemma}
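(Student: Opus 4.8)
The plan is to prove the inequality by induction on the number of vertices $\labs{\cV}$, using a greedy choice of the vertex whose weighted (closed) neighborhood sum is smallest. First I would reduce to the case $q(v)>0$ for every $v\in\cV$: a vertex with $q(v)=0$ contributes nothing to the left-hand side and, since it adds nothing to $Q(w)$ for any $w$, deleting it leaves every remaining $Q(w)$ unchanged (hence still positive), while $\alpha_1$ can only decrease when passing to an induced subgraph; so proving the bound for the pruned graph (with its possibly smaller independence number) implies it for the original. The base case $\labs{\cV}=0$ is immediate, both sides being $0$.

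For the inductive step I would pick $v^\star\in\argmin_{v\in\cV}Q(v)$, set $\cV'=\cV\setminus\cN(v^\star)$, and let $\cG'$ be the subgraph induced on $\cV'$; write $Q'(v)=\sum_{w\in\cN(v)\cap\cV'}q(w)$. Split $\sum_{v\in\cV}q(v)/Q(v)$ into the part over $\cN(v^\star)$ and the part over $\cV'$. For the first part, minimality gives $Q(v)\ge Q(v^\star)$ for all $v\in\cN(v^\star)$, and $\sum_{v\in\cN(v^\star)}q(v)=Q(v^\star)$ by definition, so $\sum_{v\in\cN(v^\star)}q(v)/Q(v)\le \sum_{v\in\cN(v^\star)}q(v)/Q(v^\star)=1$. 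For the second part, deleting vertices only removes nonnegative terms, so $Q'(v)\le Q(v)$, whence $q(v)/Q(v)\le q(v)/Q'(v)$ for $v\in\cV'$, and the induction hypothesis applied to $\cG'$ yields $\sum_{v\in\cV'}q(v)/Q'(v)\le\alpha_1(\cG')$. Finally I would invoke the elementary fact $\alpha_1(\cG')\le\alpha_1(\cG)-1$ (a maximum independent set of $\cG'$ lies entirely outside $\cN(v^\star)$, so adjoining $v^\star$ produces an independent set of $\cG$ that is one larger), and conclude $\sum_{v\in\cV}q(v)/Q(v)\le 1+\alpha_1(\cG')\le\alpha_1(\cG)$.

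The one point that needs care — and the main obstacle — is making sure the induction hypothesis is legitimately applicable to $\cG'$, i.e.\ that $Q'(v)>0$ for every $v\in\cV'$: a priori all of $v$'s $\cG$-neighbours could have been deleted. This is exactly where the paper's convention $v\in\cN(v)$ is used: for $v\in\cV'$ we have $v\in\cN(v)\cap\cV'$, so $Q'(v)\ge q(v)>0$ after the initial pruning step. Everything else is routine bookkeeping.
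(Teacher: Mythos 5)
Your proof is correct and is essentially the paper's argument: both greedily peel off the closed neighborhood of a vertex minimizing $Q$, observe that this block contributes at most $1$ to the sum (since $Q(v)\ge Q(v^\star)$ there and $\sum_{v\in\cN(v^\star)}q(v)=Q(v^\star)$), and note that each peel costs at least one unit of independence number. The only cosmetic difference is that you recurse with the induced-subgraph weights $Q'$ --- which forces the extra checks $Q'(v)\le Q(v)$ and $Q'(v)>0$ (the latter handled correctly via the convention $v\in\cN(v)$ after pruning zero-weight vertices) --- whereas the paper keeps the original $Q$ throughout and simply partitions $\cV$ into at most $\alpha_1$ blocks each contributing at most $1$.
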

\begin{proof}
Initialize $V_1=\cV$, fix $w_1\in\argmin_{w\in V_1}Q(w)$, and denote $V_{2}=\cV\m \cN(w_{1})$. For $k\geq 2$ fix $w_{k}\in\argmin_{w\in V_{k}}Q(w)$ and shrink $V_{k+1}=V_{k}\m \cN(w_{k})$ until $V_{k+1}=\varnothing$. Since $\cG$ is undirected $w_{k}\notin\bigcup_{s=1}^{k-1}\cN(w_{s})$, therefore the number $m$ of times that an action can be picked this way is upper bounded by $\alpha_1$. Denoting $\cN'(w_k)=V_{k}\cap \cN(w_k)$ this implies
\begin{align*}
	\sum_{v\in \cV}\frac{q(v)}{Q(v)}
& =
	\sum_{k=1}^{m}\sum_{v\in \cN'(w_k)}\frac{q(v)}{Q(v)}
\le
	\sum_{k=1}^{m}\sum_{v\in \cN'(w_k)}\frac{q(v)}{Q(w_{k})}
\\ & \le
	\sum_{k=1}^{m}\frac{\sum_{v\in \cN(w_k)}q(v)}{Q(w_{k})}
=
	m
\le
	\alpha_1
\end{align*}
concluding the proof.
\end{proof}

\begin{lemma}
\label{l:alpha-q-bound}
Let $\cG = (\cV,\cE)$ be an undirected graph with independence number $\alpha_1$. For each $v \in \cV$, let $\cN(v)$ be the neighborhood of node $v$ (including $v$ itself), and $p(1,v),\dots,p(k,v) \ge 0$. 
Then, for all $i\in \{1,\dots,k\}$,
\[
    \sum_{v \in \cV} \frac{p(i,v)}{q(i,v)}
\le
    \frac{1}{1-e^{-1}}\left(\alpha_1 + \sum_{v\in \cV} p(i,v) \right) \quad
\text{where} \quad q(i,v) = 1 - \prod_{w \in \cN(v)}\bigl(1-p(i,w)\bigr)~.
\]
\end{lemma}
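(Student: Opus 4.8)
The plan is to reduce Lemma~\ref{l:alpha-q-bound} to Lemma~\ref{lm:bound-alpha} by first replacing the denominator $q(i,v)$ with a quantity that is (piecewise) linear in the neighborhood mass. Fix the coordinate $i$ and abbreviate $p_v := p(i,v)$, $q_v := q(i,v)$, and $P_v := \sum_{w\in\cN(v)}p_w$. It suffices to bound $\sum_{v:\,p_v>0}\frac{p_v}{q_v}$, since a summand with $p_v=0$ vanishes, and whenever $p_v>0$ we have $q_v\ge p_v>0$ (because $v\in\cN(v)$), so the expression is well defined.

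First I would prove the pointwise estimate
\[
    q_v = 1 - \prod_{w\in\cN(v)}(1-p_w) \ \ge\ 1 - e^{-P_v} \ \ge\ (1-e^{-1})\min\{P_v,1\}.
\]
The first inequality is $1-x\le e^{-x}$; for the second, if $P_v\ge 1$ then $1-e^{-P_v}\ge 1-e^{-1}$, while if $P_v\le 1$ I use that $x\mapsto 1-e^{-x}$ is concave and hence lies above the chord joining $(0,0)$ to $(1,1-e^{-1})$ on $[0,1]$, i.e.\ $1-e^{-x}\ge (1-e^{-1})x$. Dividing, and using $\max\{a,b\}\le a+b$ for $a,b\ge 0$, this gives
\[
    \frac{p_v}{q_v} \ \le\ \frac{1}{1-e^{-1}}\cdot\frac{p_v}{\min\{P_v,1\}} \ =\ \frac{1}{1-e^{-1}}\max\Bigl\{\tfrac{p_v}{P_v},\,p_v\Bigr\} \ \le\ \frac{1}{1-e^{-1}}\Bigl(\frac{p_v}{P_v}+p_v\Bigr).
\]

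Summing over $v$ with $p_v>0$ yields $\sum_v\frac{p_v}{q_v}\le\frac{1}{1-e^{-1}}\bigl(\sum_{v:\,p_v>0}\frac{p_v}{P_v}+\sum_{v\in\cV}p_v\bigr)$, so the proof is finished once we show $\sum_{v:\,p_v>0}\frac{p_v}{P_v}\le\alpha_1$. This is precisely Lemma~\ref{lm:bound-alpha}, applied to the subgraph of $\cG$ induced on $\cV_+:=\{v:\,p_v>0\}$ with the node weights $p|_{\cV_+}$: in this induced subgraph the closed neighborhood of $v\in\cV_+$ is $\cN(v)\cap\cV_+$, and since the discarded vertices carry weight $0$, the corresponding ``$Q(v)$'' of Lemma~\ref{lm:bound-alpha} equals $\sum_{w\in\cN(v)\cap\cV_+}p_w = P_v\ge p_v>0$, as that lemma's hypothesis requires; moreover the independence number of an induced subgraph is at most $\alpha_1$. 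Combining the two displays completes the argument.

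I do not expect a serious obstacle: the only points needing care are the degenerate cases ($p_v=0$ or $P_v=0$, handled by restricting the sum to $\cV_+$) and the bookkeeping point that lets Lemma~\ref{lm:bound-alpha} be invoked with denominators equal to the full-graph masses $P_v$ rather than induced-subgraph masses — which is exactly why one passes to $\cG[\cV_+]$, where the two coincide. The rest is the elementary convexity inequality $1-e^{-x}\ge(1-e^{-1})x$ on $[0,1]$.
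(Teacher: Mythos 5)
Your proof is correct and follows essentially the same route as the paper's: the same two inequalities $1-x\le e^{-x}$ and $1-e^{-x}\ge(1-e^{-1})x$ on $[0,1]$ yield $q(i,v)\ge(1-e^{-1})\min\{P(i,v),1\}$ (the paper phrases this as a case split over $P(i,v)\ge 1$ versus $P(i,v)<1$ rather than a single pointwise bound), and the remaining sum $\sum_v p(i,v)/P(i,v)$ is controlled by Lemma~\ref{lm:bound-alpha}. Your restriction to $\cV_+=\{v: p(i,v)>0\}$ is a welcome extra bit of care about the degenerate vertices that the paper's invocation of Lemma~\ref{lm:bound-alpha} glosses over, but it does not change the argument.
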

\begin{proof}
Fix $i \in\{1,\dots,k\}$ and set for brevity $P(i,v) = \sum_{w \in \cN(v)} p(i,w)$. We can write
\begin{align*}
    \sum_{v \in \cV} \frac{p(i,v)}{q(i,v)}
&=
    \underbrace{\sum_{v \in \cV \,:\, P(i,v) \ge 1} \frac{p(i,v)}{q(i,v)}}_{\mathrm{(I)}}
\quad + \quad
    \underbrace{\sum_{v \in \cV\,:\, P(i,v) < 1} \frac{p(i,v)}{q(i,v)}}_{\mathrm{(II)}}~,
\end{align*}
and proceed by upper bounding the two terms~(I) and~(II) separately. Let $r(v)$ be the cardinality of $\cN(v)$. We have, for any given $v \in \cV$,
\[
    \min\left\{  q(i,v) \,:\, \sum_{w \in \cN(v)} p(i,w) \ge 1 \right\}
=
    1-\left(1-\frac{1}{r(v)}\right)^{r(v)}
\ge
    1-e^{-1}~.
\]
The equality is due to the fact that the minimum is achieved when $p(i,w) = \frac{1}{r(v)}$ for all $w \in \cN(v)$, and the inequality comes from $r(v) \ge 1$ (for, $v \in \cN(v)$). Hence
\begin{align*}
    \mathrm{(I)}
\le
    \sum_{v \in \cV \,:\, P(i,v) \ge 1} \frac{p(i,v)}{1-e^{-1}}
\le
    \sum_{v \in \cV} \frac{p(i,v)}{1-e^{-1}}~.
\end{align*}
As for~(II), using the inequality $1-x \leq e^{-x}, x\in [0,1]$, with $x = p(i,w)$, we can write
\begin{align*}
    q(i,v)
\ge
    1-\exp\left(-\sum_{w \in \cN(v)} p(i,w)\right)
=
    1-\exp\left(- P(i,v)\right)~.
\end{align*}
In turn, because $P(i,v) < 1$ in terms (II), we can use the inequality $1-e^{-x} \geq (1-e^{-1})\,x$, holding when $x \in [0,1]$, with $x = P(i,v)$, thereby concluding that
\[
 q(i,v) \ge (1-e^{-1})P(i,v)
\]

Thus
\begin{align*}
    \mathrm{(II)}
\le
    \sum_{v \in \cV\,:\, P(i,v) < 1} \frac{p(i,v)}{(1-e^{-1})P(i,v)}
\le
    \frac{1}{1-e^{-1}}\,\sum_{v \in \cV} \frac{p(i,v)}{P(i,v)}
\le
    \frac{\alpha_1}{1-e^{-1}}~,
\end{align*}
where in the last step we used Lemma~\ref{lm:bound-alpha}. 
\end{proof}

\end{document}